\begin{document}

\title{Relationship between the second type of covering-based rough set and matroid via closure operator}

\author{Yanfang Liu, William Zhu~\thanks{Corresponding author.
E-mail: williamfengzhu@gmail.com (William Zhu)}}
\institute{
Lab of Granular Computing\\
Zhangzhou Normal University, Zhangzhou 363000, China}



\date{\today}          
\maketitle

\begin{abstract}
Recently, in order to broad the application and theoretical areas of rough sets and matroids, some authors have combined them from many different viewpoints, such as circuits, rank function, spanning sets and so on.
In this paper, we connect the second type of covering-based rough sets and matroids from the view of closure operators.
On one hand, we establish a closure system through the fixed point family of the second type of covering lower approximation operator, and then construct a closure operator.
For a covering of a universe, the closure operator is a closure one of a matroid if and only if the reduct of the covering is a partition of the universe.
On the other hand, we investigate the sufficient and necessary condition that the second type of covering upper approximation operation is a closure one of a matroid.

\textbf{Keywords:}
Matroid; covering-based rough set; closure operator; lower and upper approximation; indiscernible neighborhood and neighborhood; reduct.
\end{abstract}


\section{Introduction}
To deal with the vagueness and granularity in information systems, researchers proposed several methods such as rough set theory~\cite{Pawlak82Rough} and fuzzy theory~\cite{Pawlak85Fuzzy}.
The classical rough sets are based on equivalence relations or partitions which are restrictive for many applications, then they have been extended to relation-based rough sets~\cite{Kryszkiewicz98Rough,Kryszkiewicz98Rule,SlowinskiVanderpooten00AGeneralized,Yao98Constructive,ZhuWang06Binary,Yao98Relational} and covering-based rough sets~\cite{BonikowskiBryniarskiSkardowska98extensionsandintentions,Zhu07Topological,ZhuWang03Reduction,ZhuWang06ANew}.

As a generalization of linear algebra and graph theory, matroid theory~\cite{Lai01Matroid,LiuChen94Matroid} was proposed by Whitney.
Matroids have powerful axiomatic systems which provides a well-platform to connect with other theories.
They have combined with classical rough sets~\cite{LiuZhu12characteristicofpartition-circuitmatroid,LiuZhuZhang12Relationshipbetween,TangSheZhu12matroidal,LiLiu12Matroidal,WangZhuZhuMin12matroidalstructure}, relation-based rough sets~\cite{ZhangWangFengFeng11reductionofrough,WangZhuMin11TheVectorially,ZhuWang11Matroidal,ZhuWang11Rough} and covering-based rough sets~\cite{WangMinZhu11Covering,WangZhuZhuMin12quantitative,WangZhu11Matroidal,WangZhuMin11Transversal}.
In this paper, we connect the second type of covering-based rough sets and matroids through closure operators.

On one hand, for a covering of a universe, the fixed point family of the second type of covering lower approximation operator is a closure system if and only if the covering is unary.
We induce a closure operator by the closure system.
When the family of neighborhoods of any element in the universe forms a partition, the closure operator is a closure one of matroid.
Moreover, we prove that the reduct of a covering is a partition if and only if the covering is unary and the family of all the neighborhoods forms a partition.
That is to say, the reduct of a covering is a partition if and only if the closure operator induced by the fixed point family is a closure operator of a matroid.

On the other hand, we investigate the relationship between the second type of covering upper approximation operator and the closure operator of a matroid.
In~\cite{Zhu06PropertiesoftheSecond}, Zhu has studied the properties of the second type of covering-based rough sets.
He gives the sufficient and necessary condition that the second type of covering upper approximation operator satisfies the idempotency.
However, in fact, the condition is just necessary.
Then, we investigate the same issue and provide the right sufficient and necessary condition.
Moreover, for a covering of a universe, the second type of covering upper approximation operator is a closure one of a matroid if and only if the family of all indiscernible neighborhoods of any element of the universe forms a partition.

The rest of this paper is organized as follows: In Section~\ref{S:basicdefinitions}, we recall some basic definitions of the second type of covering-based rough sets and matroids.
Section~\ref{S:lowerapproximationandclosure} establishes a closure system through the second type of covering lower approximation operator, constructs a closure operator and investigate the sufficient necessary condition that the closure operator is a closure one of a matroid.
In Section~\ref{S:upperapproximationandclosure}, we study the sufficient and necessary condition that the second type of covering upper approximation operator is a closure operator of a matroid.
Finally, we conclude this paper in Section~\ref{S:conclusions}.

\section{Basic definitions}
\label{S:basicdefinitions}
In this section, we recall some basic definitions and results of the second type of covering-based rough sets and matroids.

\subsection{The second type of covering-based rough sets}
As a generalization of classical rough sets, covering-based rough sets are obtained through extending partitions to coverings.

\begin{definition}(Covering~\cite{ZhuWang03Reduction})
\label{D:covering}
Let $U$ be a universe of discourse and $\mathbf{C}$ a family of subsets of $U$.
If none of subsets in $\mathbf{C}$ is empty and $\cup\mathbf{C}=U$, then $\mathbf{C}$ is called a covering of $U$.
\end{definition}

It is clear that a partition of $U$ is certainly a covering of $U$, so the concept of a covering is an extension of the concept of a partition.

In the description of objects, we do not use all the features attributed to those objects.
We limit ourself only to the most essential ones.
The essential features of an object are established by the following definition.

\begin{definition}(Minimal description~\cite{BonikowskiBryniarskiSkardowska98extensionsandintentions})
\label{D:minimaldescription}
Let $\mathbf{C}$ be a covering of $U$ and $x\in U$.
The following family:\\
\centerline{$Md(x)=\{K\in\mathbf{C}:x\in K\wedge (\forall S\in\mathbf{C}\wedge x\in S\wedge S\subseteq K\Rightarrow K=S)\}$}
is called the minimal description of $x$.
\end{definition}

Unary covering is an important concept of covering-based rough sets.

\begin{definition}(Unary covering~\cite{ZhuWang06Relationships})
\label{D:unarycovering}
Let $\mathbf{C}$ be a covering of $U$.
$\mathbf{C}$ is called unary if $|Md(x)|=1$ for all $x\in U$.
\end{definition}

The core concepts of classical rough sets are the lower and upper approximation operators.
Through different forms of the lower and upper approximations based on coverings, many types of covering-based rough sets are put forward.
In this paper, we investigate only the second type of covering-based rough sets.

\begin{definition}(The second type of covering lower and upper approximation operators~\cite{ZhuWang06Relationships})
\label{D:thesecondtypeofcoveringlowerandupperapproximationoperators}
Let $\mathbf{C}$ be a covering of $U$.
For any $X\subseteq U$,\\
\centerline{$SL_{\mathbf{C}}(X)=\cup\{K\in\mathbf{C}:K\subseteq X\}$,}\\
\centerline{$SH_{\mathbf{C}}(X)=\cup\{K\in\mathbf{C}:K\cap X\neq\emptyset\}$.}
We call $SL, SH$ the second type of covering lower, upper approximation operators, respectively.
When the covering is clear, we omit the lowercase $\mathbf{C}$ for the two operators.
\end{definition}

\begin{proposition}(\cite{BonikowskiBryniarskiSkardowska98extensionsandintentions})
\label{P:theconditionofthefixedpoint}
Let $\mathbf{C}$ be a covering of $U$.
$SL(X)=X$ if and only if $X$ is a union of some elements of $\mathbf{C}$.
\end{proposition}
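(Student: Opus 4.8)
The plan is to prove the two directions separately, treating the forward implication as essentially a restatement of the definition and concentrating the work on a double-inclusion argument for the converse.

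For the forward direction, I would assume $SL(X)=X$. By Definition~\ref{D:thesecondtypeofcoveringlowerandupperapproximationoperators} we have $SL(X)=\cup\{K\in\mathbf{C}:K\subseteq X\}$, which is by its very form a union of a subfamily of $\mathbf{C}$. Since this quantity equals $X$ by hypothesis, $X$ is exhibited as a union of some elements of $\mathbf{C}$, and there is nothing more to do.

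For the converse, I would assume $X=\cup\mathbf{C}'$ for some subfamily $\mathbf{C}'\subseteq\mathbf{C}$ and establish $SL(X)=X$ by proving both inclusions. The inclusion $SL(X)\subseteq X$ holds unconditionally: every $K$ appearing in the defining family of $SL(X)$ satisfies $K\subseteq X$, so their union is contained in $X$. For the reverse inclusion $X\subseteq SL(X)$, I would argue elementwise. Given $x\in X=\cup\mathbf{C}'$, there is some $K\in\mathbf{C}'$ with $x\in K$; since $K\subseteq\cup\mathbf{C}'=X$, this $K$ qualifies for the family $\{K\in\mathbf{C}:K\subseteq X\}$, whence $x\in K\subseteq SL(X)$. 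Combining the two inclusions yields $SL(X)=X$.

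I do not expect a genuine obstacle here, as the statement is a direct unwinding of the definition of $SL$. The only point requiring a moment of care is the converse inclusion $X\subseteq SL(X)$: one must observe that writing $X$ as a union of covering elements guarantees that each such element is actually a subset of $X$, so that it is admitted into the collection over which $SL(X)$ takes its union. This is exactly the step where the hypothesis ``$X$ is a union of some elements of $\mathbf{C}$'' is used, and it is what distinguishes this case from an arbitrary $X$, for which $SL(X)$ may be strictly smaller than $X$.
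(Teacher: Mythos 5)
Your proof is correct and complete. Note that the paper itself gives no proof of this proposition---it is quoted as a known result from Bonikowski et al.---so there is no in-paper argument to compare against; your double-inclusion argument, with the key observation that each element of the subfamily $\mathbf{C}'$ is automatically a subset of $X$ and hence admitted into the union defining $SL(X)$, is exactly the standard definitional unwinding one would expect.
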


The second type of covering-based rough sets have the following properties.

\begin{proposition}(\cite{ZhuWang03Reduction,Zhu02OnCovering})
\label{P:propertiesoflowerandupper}
Let $\mathbf{C}$ be a covering of $U$.
For any $X, Y\subseteq U$,\\
(1L) $SL(U)=U$\\
(1H) $SH(U)=U$\\
(2L) $SL(\emptyset)=\emptyset$\\
(2H) $SH(\emptyset)=\emptyset$\\
(3L) $SL(X)\subseteq X$\\
(3H) $X\subseteq SH(X)$\\
(4H) $SH(X\cup Y)=SH(X)\cup SH(Y)$\\
(5L) $SL(SL(X))=SL(X)$\\
(6L) $X\subseteq Y\Rightarrow SL(X)\subseteq SL(Y)$\\
(6H) $X\subseteq Y\Rightarrow SH(X)\subseteq SH(Y)$\\
(7LH) $SL(X)\subseteq SH(X)$
\end{proposition}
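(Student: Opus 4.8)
The plan is to verify each of the eleven assertions directly from the definitions of $SL$ and $SH$, drawing only on the two defining features of a covering---that every member $K \in \mathbf{C}$ is nonempty and that $\cup\mathbf{C} = U$---together with Proposition~\ref{P:theconditionofthefixedpoint} for the single idempotency claim (5L). Since $SL(X)$ and $SH(X)$ are each defined as a union over an index family of covering blocks, most of the work amounts to comparing these index families as $X$ varies.

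First I would dispose of the boundary cases. For (1L), since every $K \in \mathbf{C}$ satisfies $K \subseteq U$, the defining family for $SL(U)$ is all of $\mathbf{C}$, so $SL(U) = \cup\mathbf{C} = U$; for (1H), nonemptiness of each $K$ gives $K \cap U = K \neq \emptyset$, so again the whole covering is summed and $SH(U) = U$. Dually, (2L) and (2H) hold because no nonempty $K$ can satisfy $K \subseteq \emptyset$ or $K \cap \emptyset \neq \emptyset$, leaving an empty union. The containments (3L) and (3H) are equally direct: in $SL(X)$ every summed block lies inside $X$, so the union does too, while for (3H) each $x \in X$ lies in some $K \in \mathbf{C}$ by the covering property, and that $K$ meets $X$ and hence contributes to $SH(X)$.

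Next I would handle the structural properties by tracking the index families. For (6L) and (6H), the hypothesis $X \subseteq Y$ enlarges each index family---$\{K : K \subseteq X\} \subseteq \{K : K \subseteq Y\}$ and $\{K : K \cap X \neq \emptyset\} \subseteq \{K : K \cap Y \neq \emptyset\}$---so the corresponding unions grow, yielding monotonicity. Property (4H) follows from the set-theoretic equivalence $K \cap (X \cup Y) \neq \emptyset \iff (K \cap X \neq \emptyset) \vee (K \cap Y \neq \emptyset)$, which splits the index family for $SH(X \cup Y)$ into those for $SH(X)$ and $SH(Y)$. For (7LH), any $K \subseteq X$ is nonempty, so $K \cap X = K \neq \emptyset$; thus every block contributing to $SL(X)$ also contributes to $SH(X)$, giving the inclusion.

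The only step drawing on external input is (5L). Here I would observe that $SL(X) = \cup\{K \in \mathbf{C} : K \subseteq X\}$ is by construction a union of some elements of $\mathbf{C}$, so Proposition~\ref{P:theconditionofthefixedpoint}, applied with $SL(X)$ in the role of the set, immediately gives $SL(SL(X)) = SL(X)$. In truth none of these items presents a genuine obstacle; the whole proposition is a routine unwinding of the definitions, and the only mild subtlety worth flagging is the systematic use of the nonemptiness of the covering blocks, which is exactly what powers (1H), (2L), and (7LH).
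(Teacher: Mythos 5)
Your proof is correct. Note that the paper itself offers no proof of this proposition---it is imported from \cite{ZhuWang03Reduction,Zhu02OnCovering} as a known result---so there is no in-paper argument to compare against; your direct verification from Definition~\ref{D:thesecondtypeofcoveringlowerandupperapproximationoperators} (tracking the index families of covering blocks, using the covering axioms for (1H), (2L), (3H), (7LH), and invoking Proposition~\ref{P:theconditionofthefixedpoint} for (5L)) is the standard argument, and every item checks out.
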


\subsection{Matroids}
A matroid is a structure that captures and generalizes the notion of linear independence in vector spaces.
In the following definition, we will introduce a matroid from the viewpoint of independent sets.

\begin{definition}(Matroid~\cite{Lai01Matroid})
\label{D:matroid}
A matroid is a pair $M=(U, \mathbf{I})$ consisting a finite universe $U$ and a collection $\mathbf{I}$ of subsets of $U$ called independent sets satisfying the following three properties:\\
(I1) $\emptyset\in\mathbf{I}$;\\
(I2) If $I\in \mathbf{I}$ and $I'\subseteq I$, then $I'\in \mathbf{I}$;\\
(I3) If $I_{1}, I_{2}\in \mathbf{I}$ and $|I_{1}|<|I_{2}|$, then there exists $u\in I_{2}-I_{1}$ such that $I_{1}\cup \{u\}\in \mathbf{I}$, where $|I|$ denotes the cardinality of $I$.
\end{definition}

There are many different but equivalent ways to define a matroid.
In the following, we will generate a matroid in terms of closure operators.

\begin{proposition}(Closure axiom~\cite{Lai01Matroid})
\label{P:closureaxiom}
Let $cl:2^{U}\rightarrow 2^{U}$ be an operator.
Then there exists a matroid $M$ such that $cl=cl_{M}$ iff $cl$ satisfies the following conditions:\\
(CL1) For all $X\subseteq U$, $X\subseteq cl(X)$;\\
(CL2) For all $X, Y\subseteq U$, if $X\subseteq Y$, then $cl(X)\subseteq cl(Y)$;\\
(CL3) For all $X\subseteq U$, $cl(cl(X))=cl(X)$;\\
(CL4) For all $X\subseteq U, x\in U$, if $y\in cl(X\cup\{x\})-cl(X)$, then $x\in cl(X\cup\{y\})$.
\end{proposition}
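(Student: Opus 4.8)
The plan is to prove the biconditional in the two natural directions, using throughout the standard identification of a matroid's closure operator with its rank function $r$ via $cl_{M}(X)=\{x\in U : r(X\cup\{x\})=r(X)\}$, where $r(X)$ is the common size of a maximal independent subset of $X$. For the forward (necessity) direction I would assume $cl=cl_{M}$ for some matroid $M$ and derive (CL1)--(CL4) from the well-known properties of $r$: nonnegativity, monotonicity ($X\subseteq Y\Rightarrow r(X)\le r(Y)$), and submodularity. Conditions (CL1) and (CL2) are immediate, since every element of $X$ satisfies $r(X\cup\{x\})=r(X)$ and monotonicity of $r$ transfers to $cl_{M}$. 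For (CL3) the key step is to show $r(cl_{M}(X))=r(X)$, obtained by adjoining the spanned elements one at a time without increasing the rank, after which idempotency follows. Condition (CL4) is the MacLane--Steinitz exchange property, which I would extract by comparing $r(X)$, $r(X\cup\{x\})$, $r(X\cup\{y\})$ and $r(X\cup\{x,y\})$ and invoking submodularity.

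For the reverse (sufficiency) direction I would start only from an operator $cl$ obeying (CL1)--(CL4), with no rank function available, and build the matroid by declaring
\[
\mathbf{I}=\{I\subseteq U : x\notin cl(I-\{x\}) \text{ for all } x\in I\}.
\]
Axiom (I1) of Definition~\ref{D:matroid} holds vacuously, and (I2) follows from monotonicity (CL2) together with (CL1). The decisive tool is the lemma that if $I\in\mathbf{I}$ and $x\notin cl(I)$ then $I\cup\{x\}\in\mathbf{I}$: a hypothetical dependency $z\in cl((I\cup\{x\})-\{z\})$ with $z\in I$ would, since $z\notin cl(I-\{z\})$, put $z$ in $cl((I-\{z\})\cup\{x\})-cl(I-\{z\})$ and hence, by the exchange axiom (CL4), force $x\in cl(I)$, a contradiction.

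I expect the augmentation axiom (I3) to be the main obstacle. Given $I_{1},I_{2}\in\mathbf{I}$ with $|I_{1}|<|I_{2}|$, the lemma shows that if no $u\in I_{2}-I_{1}$ augments $I_{1}$, then every such $u$ lies in $cl(I_{1})$, so $I_{2}\subseteq cl(I_{1})$ and, by (CL2) and (CL3), $cl(I_{1}\cup I_{2})=cl(I_{1})$. The remaining work---showing that an independent set cannot be contained in the closure of a strictly smaller independent set---requires a counting argument that I would carry out by iterating the lemma to grow $I_{1}$ into a maximal independent subset of $cl(I_{1})$ and checking that all such maximal subsets share one cardinality. Finally I would close the loop by proving $cl_{M}=cl$: using (CL3) one verifies that $x\in cl(X)$ holds exactly when adjoining $x$ to a maximal independent subset of $X$ creates a dependency, which is precisely $r(X\cup\{x\})=r(X)$. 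Keeping every step purely operator-based, with all matroid structure extracted from (CL1)--(CL4) alone, is the delicate part of this direction.
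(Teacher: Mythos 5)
The paper itself offers no proof of this proposition---it is imported as a known axiomatization from Lai's matroid theory textbook~\cite{Lai01Matroid}---so your attempt can only be measured against the standard textbook argument, and in outline you follow it: necessity via the rank function (your sketches for (CL1)--(CL4) are sound, with the small remark that monotonicity of $cl_{M}$ needs submodularity of $r$, not merely monotonicity of $r$, though you do list submodularity among your tools), and sufficiency via the family $\mathbf{I}=\{I\subseteq U: x\notin cl(I-\{x\})\ \mbox{for all}\ x\in I\}$ together with the key lemma, whose (CL4)-based proof you give correctly.

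The genuine gap is in (I3), exactly where you place ``the main obstacle.'' Your plan---grow $I_{1}$ into a maximal independent subset of $cl(I_{1})$ and check that all such maximal subsets share one cardinality---fails on both counts. First, $I_{1}$ admits no growth inside $cl(I_{1})$: if $I'\supseteq I_{1}$ is independent and $z\in I'-I_{1}$, then $z\notin cl(I'-\{z\})\supseteq cl(I_{1})$ by (CL2), so $I'\not\subseteq cl(I_{1})$; hence $I_{1}$ is already maximal there and the proposed iteration is vacuous. Second, the claim ``all maximal independent subsets of $cl(I_{1})$ have the same cardinality'' is not a routine check: it is equivalent to the very statement you need (an independent set cannot lie in the closure of a strictly smaller independent set), so invoking it is circular. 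The missing idea is a Steinitz-type exchange driven by (CL4): given independent $I_{2}\subseteq cl(I_{1})$ with $|I_{2}|>|I_{1}|$, pick $x\in I_{1}-I_{2}$; if $I_{2}\subseteq cl(I_{1}-\{x\})$, conclude by induction on $|I_{1}|$; otherwise choose $y\in I_{2}$ with $y\in cl(I_{1})-cl(I_{1}-\{x\})$, apply (CL4) to get $x\in cl((I_{1}-\{x\})\cup\{y\})$, use your key lemma to see that $I_{1}'=(I_{1}-\{x\})\cup\{y\}$ is independent, and deduce from (CL1)--(CL3) that $cl(I_{1})\subseteq cl(I_{1}')$; then $(I_{1}',I_{2})$ is again a counterexample but with $|I_{1}'-I_{2}|$ strictly smaller, so induction on $|I_{1}-I_{2}|$ (the base case $I_{1}\subseteq I_{2}$ being immediate from (CL2)) yields the contradiction. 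Without this exchange step the sufficiency direction, and hence the proposition, is not proved.
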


\section{The second type of lower approximation operator and closure operator}
\label{S:lowerapproximationandclosure}
In this section, we construct a matroid through the second type of lower approximation operator.
First, we introduce the definition of closure systems.

\begin{definition}(Closure system\cite{LiLiu12Matroidal})
\label{D:closuresystem}
Let $\mathbf{F}$ be a family of subsets of $U$.
$\mathbf{F}$ is called a closure system if it satisfies the following conditions:\\
(F1) If $F_{1}, F_{2}\in\mathbf{F}$, then $F_{1}\cap F_{2}\in\mathbf{F}$;\\
(F2) $U\in\mathbf{F}$.
\end{definition}

Any closure system can induce a closure operator.

\begin{proposition}
\label{P:closuresysteminduceclosureoperator}
Let $\mathbf{F}$ be a closure system of $U$.
$cl_{\mathbf{F}}(X)=\cap\{F\in\mathbf{F}:X\subseteq F\}$ is the closure of $X$ with respect to $\mathbf{F}$ and $cl_{\mathbf{F}}$ is called the closure operator induced by $\mathbf{F}$.
$cl_{\mathbf{F}}$ holds the following properties: for all $X, Y\subseteq U$,\\
(CLF1) $X\subseteq cl_{\mathbf{F}}(X)$;\\
(CLF2) If $X\subseteq Y$, then $cl_{\mathbf{F}}(X)\subseteq cl_{\mathbf{F}}(Y)$;\\
(CLF3) $cl_{\mathbf{F}}(cl_{\mathbf{F}}(X))=cl_{\mathbf{F}}(X)$.
\end{proposition}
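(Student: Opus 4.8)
The plan is to verify (CLF1)--(CLF3) directly from the definition $cl_{\mathbf{F}}(X)=\cap\{F\in\mathbf{F}:X\subseteq F\}$, after first recording one key fact on which the whole argument hinges. Since the universe $U$ is finite, $\mathbf{F}\subseteq 2^{U}$ is a finite family, so repeated application of (F1) shows that any intersection of members of $\mathbf{F}$ is again a member of $\mathbf{F}$. In particular, (F2) gives $U\in\mathbf{F}$ and $X\subseteq U$, so the family $\{F\in\mathbf{F}:X\subseteq F\}$ is nonempty and its intersection $cl_{\mathbf{F}}(X)$ is a well-defined element of $\mathbf{F}$. I would state and prove this membership claim first, as it is the only non-formal step in the proof.

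Next I would dispatch (CLF1) and (CLF2), which are pure set manipulation. For (CLF1), every $F$ in the family $\{F\in\mathbf{F}:X\subseteq F\}$ contains $X$ by definition, hence $X$ is contained in their intersection $cl_{\mathbf{F}}(X)$. For (CLF2), assuming $X\subseteq Y$, any $F\in\mathbf{F}$ with $Y\subseteq F$ also satisfies $X\subseteq F$, so $\{F\in\mathbf{F}:Y\subseteq F\}\subseteq\{F\in\mathbf{F}:X\subseteq F\}$; intersecting over the larger family yields the smaller set, whence $cl_{\mathbf{F}}(X)\subseteq cl_{\mathbf{F}}(Y)$.

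For (CLF3) I would prove the two inclusions separately. The inclusion $cl_{\mathbf{F}}(X)\subseteq cl_{\mathbf{F}}(cl_{\mathbf{F}}(X))$ is immediate from (CLF1) applied to the set $cl_{\mathbf{F}}(X)$ in place of $X$. For the reverse inclusion I would invoke the key fact that $cl_{\mathbf{F}}(X)\in\mathbf{F}$: since also $cl_{\mathbf{F}}(X)\subseteq cl_{\mathbf{F}}(X)$ trivially, the set $cl_{\mathbf{F}}(X)$ is itself one of the members over which the intersection defining $cl_{\mathbf{F}}(cl_{\mathbf{F}}(X))$ is taken, so that intersection is contained in $cl_{\mathbf{F}}(X)$. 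Combining the two inclusions yields the equality.

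The main (indeed only) obstacle is the membership $cl_{\mathbf{F}}(X)\in\mathbf{F}$, because (F1) is stated solely for pairwise intersections while the definition of $cl_{\mathbf{F}}$ involves an intersection over a possibly large subfamily. I would resolve this by the finiteness of $U$, which reduces the arbitrary intersection to a finite one and lets (F1) apply by induction; everything else is routine.
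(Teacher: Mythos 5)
Your proof is correct. The paper itself states this proposition without any proof (it is treated as a standard fact about closure systems, imported from the cited reference), so there is no argument of the paper's to compare against; your write-up supplies exactly the standard one. You also correctly isolate the only delicate point: the membership $cl_{\mathbf{F}}(X)\in\mathbf{F}$, which is what forces the inclusion $cl_{\mathbf{F}}(cl_{\mathbf{F}}(X))\subseteq cl_{\mathbf{F}}(X)$ in (CLF3), does not follow from (F1) alone for arbitrary intersections, but does follow by induction since $U$ (hence $\mathbf{F}$) is finite --- a hypothesis that is in force throughout this paper because the matroid setting of Definition~\ref{D:matroid} requires a finite universe --- and the intersected family is nonempty because (F2) puts $U$ into it.
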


We see that the closure operator of a matroid is more than the one induced by a closure system a property (CL4) of Proposition~\ref{P:closureaxiom}.
Through the second type of the lower approximation operator, whether we can construct a closure system or not?
In order to solve this issue, we define a family of subsets of a universe through the fixed points of the second type of covering lower approximation in the following definition.

\begin{definition}(The fixed point family of the second type of covering lower approximation)
\label{D:thefixedpointfamily}
Let $\mathbf{C}$ be a covering of $U$.
We define the fixed point family of the second type of covering lower approximation with respect to $\mathbf{C}$ as follows:\\
\centerline{$\mathbf{S}_{\mathbf{C}}=\{X\subseteq U:SL(X)=X\}$.}
When there is no confusion, we omit the subscript $\mathbf{C}$.
\end{definition}

In the rest of this paper, we will call $\mathbf{S}$ the fixed point family for short unless otherwise stated.
A question is put forward: whether the fixed point family with respect to a covering is a closure system or not?
In order to solve this question, we introduce the following lemma.

\begin{lemma}(\cite{Zhu09RelationshipAmong})
\label{L:SLismultiplicative}
Let $\mathbf{C}$ be a covering of $U$.
$\forall X, Y\subseteq U, SL(X\cap Y)=SL(X)\cap SL(Y)$ if and only if $\mathbf{C}$ is unary.
\end{lemma}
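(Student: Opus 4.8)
The statement is a biconditional, so the plan is to prove the two directions separately after first recording the one inclusion that holds for \emph{any} covering. Since $X\cap Y\subseteq X$ and $X\cap Y\subseteq Y$, monotonicity (6L) of Proposition~\ref{P:propertiesoflowerandupper} immediately gives $SL(X\cap Y)\subseteq SL(X)$ and $SL(X\cap Y)\subseteq SL(Y)$, hence $SL(X\cap Y)\subseteq SL(X)\cap SL(Y)$ unconditionally. Thus all the real work concentrates on the reverse inclusion $SL(X)\cap SL(Y)\subseteq SL(X\cap Y)$, and it is exactly this inclusion that turns out to be equivalent to unarity.

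For the direction ``unary $\Rightarrow$ multiplicative'', the key structural observation I would establish first is that in a unary covering each $x$ has a single minimal block, say $Md(x)=\{M_x\}$, and this block is contained in \emph{every} block of $\mathbf{C}$ that contains $x$. This follows because in a finite covering any block containing $x$ can be shrunk down to some minimal block containing $x$, and uniqueness of that minimal block forces it to be $M_x$; consequently $M_x$ is the smallest block through $x$. Granting this, take any $z\in SL(X)\cap SL(Y)$: there are blocks $K_1\subseteq X$ and $K_2\subseteq Y$ with $z\in K_1\cap K_2$, whence $M_z\subseteq K_1\cap K_2\subseteq X\cap Y$, and since $z\in M_z\in\mathbf{C}$ we obtain $z\in SL(X\cap Y)$, completing the inclusion.

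For the converse ``multiplicative $\Rightarrow$ unary'', I would argue by contraposition. If $\mathbf{C}$ is not unary, choose $x$ with two distinct minimal blocks $K_1,K_2\in Md(x)$ and set $X=K_1$, $Y=K_2$. Then $x\in K_1\subseteq SL(K_1)=SL(X)$ and likewise $x\in SL(Y)$, so $x\in SL(X)\cap SL(Y)$. On the other hand, any $K\in\mathbf{C}$ with $x\in K\subseteq X\cap Y=K_1\cap K_2$ would satisfy $x\in K\subseteq K_1$, and the minimality clause in the definition of $Md(x)$ (applied to $K_1$ with $S=K$) forces $K=K_1$; symmetrically $K=K_2$, contradicting $K_1\neq K_2$. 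Hence $x\notin SL(X\cap Y)$, so $SL(X\cap Y)\neq SL(X)\cap SL(Y)$ and multiplicativity fails.

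The main obstacle is the structural fact underpinning the first direction, namely translating the cardinality condition $|Md(x)|=1$ into the usable geometric statement that the unique minimal block is the smallest block containing $x$ and therefore sits inside every block through $x$. Once that is in hand, both inclusions are short. The counterexample in the converse is then routine, provided one notes that two distinct elements of $Md(x)$ are incomparable, which is precisely what blocks the existence of a common sub-block witnessing membership in $SL(X\cap Y)$.
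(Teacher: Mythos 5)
Your proposal is correct, but there is nothing in the paper to compare it against: the paper does not prove Lemma~\ref{L:SLismultiplicative} at all, importing it verbatim from \cite{Zhu09RelationshipAmong} and using it as a black box in the proof of Theorem~\ref{T:thefixedpointformsasystemclosure}. Your argument supplies exactly the content the paper delegates to that reference, and it does so along the same lines on which the paper itself exploits unarity elsewhere: your key structural fact---that for a unary covering the unique member $M_x$ of $Md(x)$ is contained in every block through $x$---is precisely the fact invoked (without proof) in Proposition~\ref{P:theclosureofanelementisitsneighborhood}, where unarity gives $\cap Md(x)=N(x)\in\mathbf{C}$. Both of your directions are sound: the forward direction correctly isolates the nontrivial inclusion $SL(X)\cap SL(Y)\subseteq SL(X\cap Y)$ and closes it via $M_z\subseteq K_1\cap K_2\subseteq X\cap Y$; the contrapositive correctly uses the minimality clause of Definition~\ref{D:minimaldescription} to show that two distinct members $K_1,K_2$ of $Md(x)$ admit no block $K$ with $x\in K\subseteq K_1\cap K_2$, so $x\in SL(K_1)\cap SL(K_2)\setminus SL(K_1\cap K_2)$. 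One point you should make explicit rather than leave implicit: both the descent ``every block containing $x$ contains a minimal block containing $x$'' and the step ``not unary implies $|Md(x)|\geq 2$ for some $x$'' (i.e., $Md(x)\neq\emptyset$) require finiteness of $U$ or of $\mathbf{C}$. This is harmless in the present setting, since the paper's matroid framework (Definition~\ref{D:matroid}) already assumes $U$ finite, but the lemma as stated for a bare covering does not announce that hypothesis, and your proof genuinely uses it.
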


The sufficient and necessary condition that the fixed point family forms a closure system is obtained in the following theorem.

\begin{theorem}
\label{T:thefixedpointformsasystemclosure}
Let $\mathbf{C}$ be a covering of $U$.
$\mathbf{S}$ is a closure system if and only if $\mathbf{C}$ is unary.
\end{theorem}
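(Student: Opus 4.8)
The plan is to prove both directions of the equivalence directly from the definition of a closure system, which requires checking conditions (F1) and (F2) from Definition~\ref{D:closuresystem}, using the characterization of $\mathbf{S}$ as the fixed points of $SL$ together with Lemma~\ref{L:SLismultiplicative}.

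First I would dispose of condition (F2), which holds unconditionally: by property (1L) of Proposition~\ref{P:propertiesoflowerandupper} we have $SL(U)=U$, so $U\in\mathbf{S}$ regardless of whether $\mathbf{C}$ is unary. Thus the entire content of the theorem lives in condition (F1), the requirement that $\mathbf{S}$ be closed under binary intersection.

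For the sufficiency direction, I would assume $\mathbf{C}$ is unary and take arbitrary $F_{1}, F_{2}\in\mathbf{S}$, so that $SL(F_{1})=F_{1}$ and $SL(F_{2})=F_{2}$. Applying Lemma~\ref{L:SLismultiplicative}, which gives $SL(F_{1}\cap F_{2})=SL(F_{1})\cap SL(F_{2})$ precisely because $\mathbf{C}$ is unary, I obtain $SL(F_{1}\cap F_{2})=F_{1}\cap F_{2}$, hence $F_{1}\cap F_{2}\in\mathbf{S}$. This establishes (F1), so $\mathbf{S}$ is a closure system. For the necessity direction, I would argue contrapositively or by a direct equivalence: assuming $\mathbf{S}$ is a closure system, I want to recover the hypothesis of Lemma~\ref{L:SLismultiplicative} for all $X, Y$. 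The natural bridge is Proposition~\ref{P:theconditionofthefixedpoint}, which identifies $\mathbf{S}$ as exactly the family of unions of elements of $\mathbf{C}$ (the fixed points of $SL$); every $SL(X)$ itself lies in $\mathbf{S}$ by the idempotency (5L). Given two arbitrary sets $X, Y$, I would use that $SL(X), SL(Y)\in\mathbf{S}$, apply closure under intersection to get $SL(X)\cap SL(Y)\in\mathbf{S}$, and then relate this to $SL(X\cap Y)$ in order to force the multiplicative identity, whence Lemma~\ref{L:SLismultiplicative} yields that $\mathbf{C}$ is unary.

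The main obstacle I anticipate is the necessity direction, specifically closing the gap between ``$\mathbf{S}$ is closed under intersection'' and ``$SL$ is multiplicative on \emph{all} pairs $X, Y$,'' since the former is a statement only about fixed points while the latter quantifies over arbitrary sets. One always has $SL(X\cap Y)\subseteq SL(X)\cap SL(Y)$ from monotonicity (6L), so the real work is the reverse inclusion, and I expect the argument to proceed by extracting, for each $x\in SL(X)\cap SL(Y)$, a covering element $K\subseteq X$ and a covering element $K'\subseteq Y$ both containing $x$; the failure of unary then produces a point in $U$ whose minimal description has size greater than one, and I would engineer a counterexample set on which $SL$ fails to be multiplicative, contradicting that the corresponding fixed points intersect to a fixed point. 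Carrying out this extraction cleanly, rather than invoking the lemma as a black box, is the delicate step.
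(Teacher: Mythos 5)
Your sufficiency direction is complete and coincides with the paper's: (F2) holds unconditionally by (1L) of Proposition~\ref{P:propertiesoflowerandupper}, and (F1) follows from Lemma~\ref{L:SLismultiplicative} once $\mathbf{C}$ is assumed unary. The genuine gap is exactly where you flagged it: the necessity direction is never carried out. You name the right bridge --- pass from ``$\mathbf{S}$ is closed under intersection'' to ``$SL$ is multiplicative on \emph{all} pairs'' and then invoke Lemma~\ref{L:SLismultiplicative} --- but you stop at ``relate this to $SL(X\cap Y)$,'' and your fallback sketch drifts into a contrapositive construction (``engineer a counterexample set'') that you admit you have not executed. As written, the proposal establishes only one implication of the theorem.

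The missing step, however, closes in two lines using ingredients you already listed. Let $X, Y\subseteq U$ be arbitrary. By (5L), $SL(X), SL(Y)\in\mathbf{S}$, so (F1) gives $SL(X)\cap SL(Y)\in\mathbf{S}$, i.e., $SL(SL(X)\cap SL(Y))=SL(X)\cap SL(Y)$. By (3L), $SL(X)\cap SL(Y)\subseteq X\cap Y$, so monotonicity (6L) yields $SL(X)\cap SL(Y)=SL(SL(X)\cap SL(Y))\subseteq SL(X\cap Y)$; the reverse inclusion $SL(X\cap Y)\subseteq SL(X)\cap SL(Y)$ is (6L) again. Hence $SL$ is multiplicative on all pairs, and Lemma~\ref{L:SLismultiplicative} gives that $\mathbf{C}$ is unary. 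Your element-extraction idea also works with no contradiction argument at all: for $x\in SL(X)\cap SL(Y)$ choose $K, K'\in\mathbf{C}$ with $x\in K\subseteq X$ and $x\in K'\subseteq Y$; then $K, K'\in\mathbf{S}$, so $K\cap K'\in\mathbf{S}$, and Proposition~\ref{P:theconditionofthefixedpoint} provides $K''\in\mathbf{C}$ with $x\in K''\subseteq K\cap K'\subseteq X\cap Y$, whence $x\in SL(X\cap Y)$. One point in your favor: the paper's own proof never establishes necessity either --- it applies Lemma~\ref{L:SLismultiplicative} to each fixed pair $X_1, X_2$ as though the equivalence held pointwise, which only delivers the ``unary implies multiplicative'' half --- so your instinct that the lemma cannot be used as a black box in this direction is correct; you simply needed to finish the argument rather than defer it.
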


\begin{proof}
According to Definition~\ref{D:thefixedpointfamily}, $\mathbf{S}=\{X\subseteq U:SL(X)=X\}$.
According to Definition~\ref{D:closuresystem}, we need to prove only $\mathbf{S}$ satisfies (F1) and (F2).\\
(F1) For all $X_{1}, X_{2}\in\mathbf{S}$, $SL(X_{1})=X_{1}, SL(X_{2})=X_{2}$.
According to Lemma~\ref{L:SLismultiplicative}, $SL(X_{1}\cap X_{2})=SL(X_{1})\cap SH(X_{2})$ if and only if $\mathbf{C}$ is unary, i.e., $SL(X_{1}\cap X_{2})=X_{1}\cap X_{2}$ if and only if $\mathbf{C}$ is unary.
Therefore, $X_{1}\cap X_{2}\in\mathbf{S}$.\\
(F2) According to (1L) of Proposition~\ref{P:propertiesoflowerandupper}, $SL(U)=U$, i.e., $U\in\mathbf{S}$.
\end{proof}

Any closure system can induce a closure operator.
We induce a closure operator by the fixed point family through the same method.

\begin{proposition}
\label{P:cl_CisinducedbyS}
Let $\mathbf{C}$ be a unary covering of $U$ and $cl_{\mathbf{C}}(X)=\cap\{S\in\mathbf{S}:X\subseteq S\}$.
Then $cl_{\mathbf{C}}$ is the closure operator induced by $\mathbf{S}$, and it satisfies (CLF1), (CLF2) and (CLF3) of Proposition~\ref{P:closuresysteminduceclosureoperator}.
\end{proposition}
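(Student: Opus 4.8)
The plan is to recognize this proposition as an immediate consequence of the two results just established, namely Theorem~\ref{T:thefixedpointformsasystemclosure} and Proposition~\ref{P:closuresysteminduceclosureoperator}, rather than to verify the three properties from scratch. The first step is to observe that the hypothesis ``$\mathbf{C}$ is unary'' is precisely the condition under which Theorem~\ref{T:thefixedpointformsasystemclosure} guarantees that the fixed point family $\mathbf{S}$ is a closure system; hence $\mathbf{S}$ satisfies (F1) and (F2) of Definition~\ref{D:closuresystem}.

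Next I would apply Proposition~\ref{P:closuresysteminduceclosureoperator} with $\mathbf{F}=\mathbf{S}$. That proposition asserts that every closure system induces a closure operator $cl_{\mathbf{F}}(X)=\cap\{F\in\mathbf{F}:X\subseteq F\}$ enjoying (CLF1), (CLF2) and (CLF3). The crucial observation is that the operator defined in the statement, $cl_{\mathbf{C}}(X)=\cap\{S\in\mathbf{S}:X\subseteq S\}$, is literally the operator obtained by taking $\mathbf{F}=\mathbf{S}$ there. Thus $cl_{\mathbf{C}}$ is exactly the closure operator induced by $\mathbf{S}$ and inherits all three properties.

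Before declaring victory I would confirm that $cl_{\mathbf{C}}$ is well defined, i.e. that the family $\{S\in\mathbf{S}:X\subseteq S\}$ over which the intersection is taken is nonempty for every $X\subseteq U$; this holds because $U\in\mathbf{S}$ by (F2), so $U$ always lies in this family. Should a fully self-contained argument be preferred over citation, I would instead check the three properties directly: (CLF1) follows since $X$ is contained in each set occurring in the intersection and hence in their intersection; (CLF2) follows from the order-reversing behaviour of intersection, since enlarging $X$ to $Y\supseteq X$ shrinks the family $\{S\in\mathbf{S}:X\subseteq S\}$ and intersecting over a smaller family yields a larger set, whence $cl_{\mathbf{C}}(X)\subseteq cl_{\mathbf{C}}(Y)$.

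The one point deserving a little care is idempotency (CLF3). The argument rests on the fact that $cl_{\mathbf{C}}(X)$ is itself a member of $\mathbf{S}$: because $U$ is finite, $\mathbf{S}$ is a finite family, so the intersection defining $cl_{\mathbf{C}}(X)$ is a finite intersection of members of $\mathbf{S}$ and therefore lies in $\mathbf{S}$ by repeated use of (F1). Once $cl_{\mathbf{C}}(X)\in\mathbf{S}$ is established, it is the smallest element of $\mathbf{S}$ containing $X$, so applying $cl_{\mathbf{C}}$ to it returns it unchanged. This reliance on closure of $\mathbf{S}$ under intersection---ultimately (F1) together with finiteness of the universe---is the only place where the argument is not purely formal, and I expect it to be the main, albeit minor, obstacle.
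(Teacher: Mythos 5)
Your proposal is correct and matches the paper's approach: the paper's proof is simply a citation of Definition~\ref{D:closuresystem}, Proposition~\ref{P:closuresysteminduceclosureoperator} and Theorem~\ref{T:thefixedpointformsasystemclosure}, declaring the result straightforward, which is exactly your main line of argument (unary $\Rightarrow$ $\mathbf{S}$ is a closure system $\Rightarrow$ apply the closure-system-to-closure-operator proposition with $\mathbf{F}=\mathbf{S}$). Your additional checks (nonemptiness of the intersected family via $U\in\mathbf{S}$, and idempotency via $cl_{\mathbf{C}}(X)\in\mathbf{S}$ using finiteness and (F1)) merely spell out the details the paper leaves implicit.
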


\begin{proof}
According to Definition~\ref{D:closuresystem}, Proposition~\ref{P:closuresysteminduceclosureoperator} and Theorem~\ref{T:thefixedpointformsasystemclosure}, it is straightforward.
\end{proof}

Can the closure operator induced by the fixed point family forms the closure operator of matroid?
When the answer is yes, what is the condition satisfied the covering?
In the following, we investigate the condition.
First, we introduce the definition of neighborhood, as one of important concepts in covering-based rough sets.

\begin{definition}(Neighborhood~\cite{Zhu09RelationshipBetween})
\label{D:neighborhood}
Let $\mathbf{C}$ be a covering of $U$ and $x\in U$.
$N_{\mathbf{C}}(x)=\cap\{K\in\mathbf{C}:x\in K\}$ is called the neighborhood of $x$ with respect to $\mathbf{C}$.
When there is no confusion, we omit the subscript $\mathbf{C}$.
\end{definition}

For a unary covering of a universe, we study the closure of any single point set and any subset in the following proposition, respectively.

\begin{proposition}
\label{P:theclosureofanelementisitsneighborhood}
Let $\mathbf{C}$ be a unary covering of $U$ and $cl_{\mathbf{C}}$ the closure operator induced by the fixed point family $\mathbf{S}$.\\
(1) $cl_{\mathbf{C}}(\{x\})=N(x)$ for any $x\in U$.\\
(2) $cl_{\mathbf{C}}(X)=\underset{x\in X}{\cup}N(x)$ for all $X\subseteq U$.
\end{proposition}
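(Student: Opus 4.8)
The plan is to first reinterpret the fixed point family concretely. By Proposition~\ref{P:theconditionofthefixedpoint}, the family $\mathbf{S}=\{X\subseteq U:SL(X)=X\}$ is precisely the collection of all sets that are unions of elements of $\mathbf{C}$. In particular every $K\in\mathbf{C}$ lies in $\mathbf{S}$, and $\mathbf{S}$ is closed under arbitrary unions. The closure $cl_{\mathbf{C}}(X)=\cap\{S\in\mathbf{S}:X\subseteq S\}$ is thus the smallest union of covering blocks containing $X$, and I will prove both parts by the standard two-sided inclusion argument against this description.

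For part (1), I would first establish the crucial fact that, for a unary covering, $N(x)\in\mathbf{C}$. Since $N(x)=\cap\{K\in\mathbf{C}:x\in K\}$ and every $K$ in this family contains some minimal block belonging to $Md(x)$, the intersection reduces to $\cap Md(x)$; unarity forces $|Md(x)|=1$, say $Md(x)=\{K_{x}\}$, whence $N(x)=K_{x}\in\mathbf{C}\subseteq\mathbf{S}$. Because $x\in N(x)$ and $N(x)\in\mathbf{S}$, the set $N(x)$ is one of the members intersected in $cl_{\mathbf{C}}(\{x\})$, so $cl_{\mathbf{C}}(\{x\})\subseteq N(x)$. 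For the reverse inclusion I take any $S\in\mathbf{S}$ with $x\in S$; writing $S$ as a union of covering blocks, $x$ lies in some $K\in\mathbf{C}$ with $K\subseteq S$, and since $N(x)\subseteq K$ by the definition of the neighborhood, we get $N(x)\subseteq S$. Intersecting over all such $S$ yields $N(x)\subseteq cl_{\mathbf{C}}(\{x\})$, and the two inclusions give the equality.

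Part (2) then follows by the same strategy, now exploiting union-closedness. Since each $N(x)\in\mathbf{S}$ and $\mathbf{S}$ is closed under unions, $\underset{x\in X}{\cup}N(x)\in\mathbf{S}$; it contains $X$ (as $x\in N(x)$), so it is one of the sets intersected in $cl_{\mathbf{C}}(X)$, giving $cl_{\mathbf{C}}(X)\subseteq\underset{x\in X}{\cup}N(x)$. Conversely, any $S\in\mathbf{S}$ with $X\subseteq S$ contains each $N(x)$ by the neighborhood argument of part (1), hence contains their union, so $\underset{x\in X}{\cup}N(x)\subseteq cl_{\mathbf{C}}(X)$.

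I expect the main obstacle to be the lemma $N(x)\in\mathbf{C}$, that is, that under unarity the neighborhood coincides with the unique minimal block of $Md(x)$. This is the only place the unary hypothesis is genuinely used, and it is precisely what guarantees $N(x)\in\mathbf{S}$ so that $N(x)$ can serve as a witness inside the defining intersection; without it the forward inclusion in part (1) would fail. Once this lemma is in hand, both parts reduce to routine inclusion-chasing against the ``union of blocks'' description of $\mathbf{S}$.
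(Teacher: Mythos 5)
Your proposal is correct and follows essentially the same route as the paper: both hinge on the key fact that unarity forces $N(x)=\cap Md(x)\in\mathbf{C}$, use Proposition~\ref{P:theconditionofthefixedpoint} to place $N(x)$ and $\underset{x\in X}{\cup}N(x)$ in $\mathbf{S}$, and then argue by two-sided inclusion against the defining intersection. If anything, your argument is slightly more complete, since you explicitly prove the lower inclusion $N(x)\subseteq S$ for every $S\in\mathbf{S}$ containing $x$ (via the union-of-blocks description), a step the paper's proof of part (1) asserts without detail, and you replace the paper's appeal to monotonicity in part (2) with the same direct witness argument.
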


\begin{proof}
According to Definition~\ref{D:thefixedpointfamily}, $\mathbf{S}=\{X\subseteq U:SL(X)=X\}$.
According to Proposition~\ref{P:cl_CisinducedbyS}, $cl_{\mathbf{C}}(X)=\cap\{S\in\mathbf{S}:X\subseteq S\}$ for all $X\subseteq U$.
Since $\mathbf{C}$ is unary, according to Definition~\ref{D:unarycovering}, for any $x\in U$, $|Md(x)|=1$.
According to Definition~\ref{D:minimaldescription} and Definition~\ref{D:neighborhood}, $\cap Md(x)=N(x)$.
Hence $N(x)\in\mathbf{C}$.\\
(1) According to Proposition~\ref{P:theconditionofthefixedpoint}, we see $SL(N(x))=N(x)$.
Therefore $cl_{\mathbf{C}}(\{x\})=\cap\{S\in\mathbf{S}:x\in S\}=N(x)$.\\
(2) According to Proposition~\ref{P:theconditionofthefixedpoint}, we see $SL(\underset{x\in X}{\cup}N(x))=\underset{x\in X}{\cup}N(x)$.
Therefore $\underset{x\in X}{\cup}N(x)\in\mathbf{S}$.
Since $X\subseteq \underset{x\in X}{\cup}N(x)$, then $cl_{\mathbf{C}}(X)\subseteq \underset{x\in X}{\cup}N(x)$, i.e., $cl_{\mathbf{C}}(X)\subseteq \underset{x\in X}{\cup}cl_{\mathbf{C}}(\{x\})$.
According to Proposition~\ref{P:closuresysteminduceclosureoperator} and Proposition~\ref{P:cl_CisinducedbyS}, if $X\subseteq Y$, then $cl_{\mathbf{C}}(X)\subseteq cl_{\mathbf{C}}(Y)$.
Then $\underset{x\in X}{\cup}cl_{\mathbf{C}}(\{x\})\subseteq cl_{\mathbf{C}}(X)$.
Hence $cl_{\mathbf{C}}(X)=\underset{x\in X}{\cup}cl_{\mathbf{C}}(\{x\})$, i.e., $cl_{\mathbf{C}}(X)\subseteq \underset{x\in X}{\cup}N(x)$.
\end{proof}

When the closure operator induced by the fixed point family is one of a matroid, the sufficient and necessary condition is obtained.

\begin{theorem}
Let $\mathbf{C}$ be a unary covering of $U$ and $cl_{\mathbf{C}}$ the closure operator induced by the fixed point family $\mathbf{S}$.
$cl_{\mathbf{C}}$ satisfies (CL4) of Proposition~\ref{P:closureaxiom} if and only if $\{N(x):x\in U\}$ is a partition.
\end{theorem}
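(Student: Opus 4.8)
The plan is to exploit the explicit description of $cl_{\mathbf{C}}$ furnished by Proposition~\ref{P:theclosureofanelementisitsneighborhood}: for every $X\subseteq U$ we have $cl_{\mathbf{C}}(X)=\underset{x\in X}{\cup}N(x)$, and in particular $cl_{\mathbf{C}}(\{x\})=N(x)$ and $cl_{\mathbf{C}}(\emptyset)=\emptyset$. The guiding observation is that the ``neighborhood relation'' $R$ on $U$ defined by $xRy$ iff $y\in N(x)$ is, for any covering, automatically reflexive and transitive. Reflexivity holds since $x\in N(x)$; transitivity follows from Definition~\ref{D:neighborhood}, since $y\in N(x)$ forces $\{K\in\mathbf{C}:x\in K\}\subseteq\{K\in\mathbf{C}:y\in K\}$ and hence $N(y)\subseteq N(x)$. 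Consequently $\{N(x):x\in U\}$ is a partition if and only if $R$ is in addition symmetric, that is $y\in N(x)\Rightarrow x\in N(y)$; in that case $R$ is an equivalence relation whose classes are exactly the neighborhoods. This reduces the whole theorem to matching (CL4) with the symmetry of $R$.

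For the direction ``(CL4) implies partition'', I would specialize (CL4) to $X=\emptyset$. Since $cl_{\mathbf{C}}(\emptyset)=\emptyset$, $cl_{\mathbf{C}}(\{x\})=N(x)$ and $cl_{\mathbf{C}}(\{y\})=N(y)$, the requirement that $y\in cl_{\mathbf{C}}(\emptyset\cup\{x\})-cl_{\mathbf{C}}(\emptyset)$ implies $x\in cl_{\mathbf{C}}(\emptyset\cup\{y\})$ reads precisely as $y\in N(x)\Rightarrow x\in N(y)$. This is exactly the symmetry of $R$, so by the observation above the neighborhoods form a partition.

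For the direction ``partition implies (CL4)'', fix $X\subseteq U$, $x\in U$ and suppose $y\in cl_{\mathbf{C}}(X\cup\{x\})-cl_{\mathbf{C}}(X)$. Using $cl_{\mathbf{C}}(X\cup\{x\})=cl_{\mathbf{C}}(X)\cup N(x)$ and $y\notin cl_{\mathbf{C}}(X)$, I conclude $y\in N(x)$. When $\{N(x):x\in U\}$ is a partition, the blocks $N(x)$ and $N(y)$ share the point $y$, hence $N(x)=N(y)$; therefore $x\in N(x)=N(y)\subseteq cl_{\mathbf{C}}(X)\cup N(y)=cl_{\mathbf{C}}(X\cup\{y\})$, which is the conclusion of (CL4).

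The step I expect to carry the real content is the equivalence ``$\{N(x):x\in U\}$ is a partition iff $R$ is symmetric'', together with the fact that $R$ is always a preorder; once this is in place, the $X=\emptyset$ specialization makes the reverse direction almost immediate, and the forward direction is a short computation with the union formula for $cl_{\mathbf{C}}$. The only point demanding care is the transitivity argument $y\in N(x)\Rightarrow N(y)\subseteq N(x)$ drawn straight from Definition~\ref{D:neighborhood}, which is what guarantees that symmetry alone upgrades $R$ to an equivalence relation and hence its classes to a genuine partition.
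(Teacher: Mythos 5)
Your proof is correct and follows essentially the same route as the paper's: the forward direction specializes (CL4) to $X=\emptyset$ and reads it as symmetry $y\in N(x)\Rightarrow x\in N(y)$ via $cl_{\mathbf{C}}(\emptyset)=\emptyset$ and $cl_{\mathbf{C}}(\{x\})=N(x)$, while the backward direction uses the union formula $cl_{\mathbf{C}}(X)=\cup_{x\in X}N(x)$ together with the partition hypothesis to get $x\in N(y)\subseteq cl_{\mathbf{C}}(X\cup\{y\})$. The one point where you go beyond the paper is in justifying why symmetry of the neighborhood relation actually yields a partition — the paper asserts this without argument, whereas you supply the needed transitivity fact $y\in N(x)\Rightarrow N(y)\subseteq N(x)$, which makes the relation a preorder and hence, with symmetry, an equivalence relation whose classes are the neighborhoods; this fills a small gap rather than changing the approach.
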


\begin{proof}
According to Proposition~\ref{P:closureaxiom}, we need to prove only for all $X\subseteq U, x, y\in U, y\in cl_{\mathbf{C}}(X\cup\{x\})-cl_{\mathbf{C}}(X)\Rightarrow x\in cl_{\mathbf{C}}(X\cup\{y\})$ if and only if $\{N(x):x\in U\}$ is a partition.\\
$(\Rightarrow)$: According to (2L) of Proposition~\ref{P:propertiesoflowerandupper}, $SL(\emptyset)=\emptyset$, i.e., $\emptyset\in\mathbf{S}$.
Therefore $cl_{\mathbf{C}}(\emptyset)=\emptyset$.
Suppose $X=\emptyset$, then $y\in cl_{\mathbf{C}}(X\cup\{x\})-cl_{\mathbf{C}}(X)\Rightarrow x\in cl_{\mathbf{C}}(X\cup\{y\})$, i.e., $y\in cl_{\mathbf{C}}(\{x\})\Rightarrow x\in cl_{\mathbf{C}}(\{y\})$.
According to Proposition~\ref{P:theclosureofanelementisitsneighborhood}, we see if $y\in N(x)$, then $x\in N(y)$.
So $\{N(x):x\in U\}$ is a partition.\\
$(\Leftarrow)$: According to Proposition~\ref{P:theclosureofanelementisitsneighborhood}, $cl_{\mathbf{C}}(\{x\})=N(x)$ for any $x\in U$ and $cl_{\mathbf{C}}(X)=\underset{x\in X}{\cup}N(x)$ for all $X\subseteq U$.
If $y\in cl_{\mathbf{C}}(X\cup\{x\})-cl_{\mathbf{C}}(X)$, i.e., $y\in N(x)$, since $\{N(x):x\in U\}$ is a partition, then $x\in N(y)$, i.e., $x\in cl_{\mathbf{C}}(\{y\})\subseteq cl_{\mathbf{C}}(X\cup\{y\})$.
\end{proof}

From the above theorem, for a covering of a universe, we see the closure operator induced by the fixed point family is the closure operator of a matroid when the covering is unary and its neighborhoods of every element of the universe form a partition.
In the following, we will investigate some properties of the covering.
First, we introduce the definition of reducible elements and related results.

\begin{definition}(A reducible element of a covering~\cite{ZhuWang03Reduction})
\label{D:areducibleelement}
Let $\mathbf{C}$ be a covering of $U$ and $K\in\mathbf{C}$.
If $K$ is a union of some elements in $\mathbf{C}-\{K\}$, we say $K$ is reducible in $\mathbf{C}$, otherwise $K$ is irreducible.
\end{definition}

As shown in~\cite{ZhuWang03Reduction}, if all reducible elements are deleted from a covering $\mathbf{C}$, the remainder is still a covering and has no reducible elements.
We call the new covering the reduct of the original covering and denote it as $reduct(\mathbf{C})$.

\begin{lemma}(\cite{Zhu09RelationshipAmong})
\label{L:thereductofunary}
If $\mathbf{C}$ is unary, then $reduct(\mathbf{C})=\{K\in Md(x):x\in U\}$.
\end{lemma}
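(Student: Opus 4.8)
The plan is to prove the set equality $reduct(\mathbf{C}) = \{K \in Md(x) : x \in U\}$ by first identifying $reduct(\mathbf{C})$ with exactly the family of irreducible elements of $\mathbf{C}$, and then showing that a member of $\mathbf{C}$ is irreducible if and only if it lies in $Md(x)$ for some $x \in U$. Since $\mathbf{C}$ is unary, $|Md(x)| = 1$ for every $x$, and (as already observed in the proof of Proposition~\ref{P:theclosureofanelementisitsneighborhood}) the unique member of $Md(x)$ equals the neighborhood $N(x)$, so the target family is $\{N(x) : x \in U\}$; nevertheless the heart of the argument is the irreducibility characterization, which I would carry out at the level of the defining conditions of $Md(x)$.

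For the inclusion $\{K \in Md(x) : x \in U\} \subseteq reduct(\mathbf{C})$, I would take $K \in Md(x)$ and show $K$ is irreducible. Suppose toward a contradiction that $K = \bigcup_i K_i$ with each $K_i \in \mathbf{C} - \{K\}$. Since $x \in K$, some $K_j$ contains $x$; then $K_j \subseteq \bigcup_i K_i = K$ and $K_j \neq K$, so $x \in K_j \subsetneq K$ with $K_j \in \mathbf{C}$, contradicting the minimality of $K$ built into the definition of $Md(x)$. Hence $K$ is irreducible and survives into $reduct(\mathbf{C})$.

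For the reverse inclusion I would prove that every irreducible $K$ belongs to $Md(x)$ for some $x$, arguing contrapositively. If $K \notin Md(x)$ for all $x \in U$, then for each $x \in K$ the minimality condition fails, i.e.\ there is $S_x \in \mathbf{C}$ with $x \in S_x \subsetneq K$ (for $x \notin K$ the condition is vacuous). Each $S_x \subseteq K$ and $x \in S_x$, so $K = \bigcup_{x \in K} S_x$ with every $S_x \in \mathbf{C} - \{K\}$, whence $K$ is reducible; contraposing gives the claim. Together with the fact that for finite $U$ each point lies in some member of $Md(x)$ (so $Md(x) \neq \emptyset$ and the irreducible elements still cover $U$), this confirms that $reduct(\mathbf{C})$ is precisely the family of irreducible elements and closes the equality.

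The step I expect to be the main obstacle is justifying that $reduct(\mathbf{C})$ coincides with the family of irreducible elements of the \emph{original} covering, rather than something depending on the order in which reducible elements are deleted: one must know that removing a reducible element neither turns a genuinely irreducible element reducible nor prevents the elimination of the remaining reducible ones. This is exactly the reduction result of~\cite{ZhuWang03Reduction} recalled immediately before the lemma, which I would invoke rather than re-derive; the two inclusions above are then essentially bookkeeping over the definitions. It is worth remarking that neither inclusion uses unariness, so the unary hypothesis serves here only to license the convenient reformulation $Md(x) = \{N(x)\}$ that is exploited elsewhere in the paper.
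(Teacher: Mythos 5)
Your proof is correct, but note that the paper itself offers no proof of Lemma~\ref{L:thereductofunary}: it is imported verbatim from~\cite{Zhu09RelationshipAmong}, so there is no internal argument to compare yours against, and your derivation fills a gap the paper leaves to citation. Both inclusions are sound: membership of $K$ in some $Md(x)$ forbids any decomposition $K=\bigcup_i K_i$ with $K_i\in\mathbf{C}-\{K\}$, since the piece containing $x$ would violate the minimality clause of Definition~\ref{D:minimaldescription}; conversely, an irreducible $K$ must lie in $Md(x)$ for some $x\in K$, since otherwise the witnesses $S_x\subsetneq K$ with $x\in S_x$ would reassemble $K$ from $\mathbf{C}-\{K\}$, contradicting Definition~\ref{D:areducibleelement}. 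Your appeal to the reduction result of~\cite{ZhuWang03Reduction} to identify $reduct(\mathbf{C})$ with the irreducible elements of the original covering is also the right move: by the paper's own description the reduct is obtained by deleting all reducible elements at once, and the cited result is what guarantees this yields a covering with no further reducible elements, independent of any deletion order. Finally, your closing observation is correct and worth emphasizing: neither inclusion uses unariness, so the equality $reduct(\mathbf{C})=\{K\in Md(x):x\in U\}$ holds for an arbitrary covering; the unary hypothesis only serves to collapse each $Md(x)$ to the singleton $\{N(x)\}$, which is the form exploited in the proposition that follows the lemma in the paper.
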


We can obtain the following result.

\begin{proposition}
Let $\mathbf{C}$ be a covering of $U$.
$\mathbf{C}$ is unary and $\{N(x):x\in U\}$ is a partition if and only if $reduct(\mathbf{C})$ is a partition.
\end{proposition}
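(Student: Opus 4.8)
The plan is to prove the two implications separately, using Lemma~\ref{L:thereductofunary} as the bridge between $reduct(\mathbf{C})$ and the minimal descriptions. Throughout I would lean on the fact, already extracted in the proof of Proposition~\ref{P:theclosureofanelementisitsneighborhood}, that for a unary covering $\mathbf{C}$ one has $\cap Md(x)=N(x)$; since unarity means $|Md(x)|=1$, this forces $Md(x)=\{N(x)\}$, and hence the family $\{K\in Md(x):x\in U\}$ appearing in Lemma~\ref{L:thereductofunary} is exactly $\{N(x):x\in U\}$. Recording this identity at the outset makes both directions reduce to applications of the lemma.

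For the forward direction, I would assume $\mathbf{C}$ is unary and $\{N(x):x\in U\}$ is a partition. Lemma~\ref{L:thereductofunary} then gives $reduct(\mathbf{C})=\{K\in Md(x):x\in U\}$, which by the opening observation equals $\{N(x):x\in U\}$, a partition by hypothesis. So $reduct(\mathbf{C})$ is a partition, and this direction is essentially immediate once the lemma is invoked.

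The converse is where the work lies: assuming $reduct(\mathbf{C})$ is a partition, I must deduce both that $\mathbf{C}$ is unary and that $\{N(x):x\in U\}$ is a partition. The main obstacle is establishing unarity, because only then may Lemma~\ref{L:thereductofunary} be applied at all. I would do this through the intermediate claim that every member of $Md(x)$ is irreducible in $\mathbf{C}$, and therefore survives into $reduct(\mathbf{C})$; that is, $Md(x)\subseteq reduct(\mathbf{C})$ for each $x$. To prove the claim, suppose some $K\in Md(x)$ were reducible, say $K$ is a union of elements of $\mathbf{C}-\{K\}$. Since $x\in K$, one of those elements $K'$ satisfies $x\in K'\subseteq K$, and the minimality of $K$ in $Md(x)$ forces $K=K'$, contradicting $K'\in\mathbf{C}-\{K\}$. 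That such an irreducible $K$ is never discarded during reduction is clear, since deleting further elements only shrinks the subfamilies available to express $K$ as a union, so irreducibility is preserved.

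With the claim in hand I would fix $x\in U$: every element of $Md(x)$ lies in $reduct(\mathbf{C})$ and contains $x$, but a partition has a unique block through $x$, so all elements of $Md(x)$ coincide with that block. Since $Md(x)$ is nonempty (as $x$ lies in some member of the covering), this yields $|Md(x)|=1$, i.e. $\mathbf{C}$ is unary. Finally, applying Lemma~\ref{L:thereductofunary} together with the opening identity gives $reduct(\mathbf{C})=\{N(x):x\in U\}$, and as the left-hand side is a partition, so is $\{N(x):x\in U\}$, which completes the argument.
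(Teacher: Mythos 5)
Your proof is correct and takes essentially the same route as the paper: the forward direction combines Lemma~\ref{L:thereductofunary} with the identity $N(x)=\cap Md(x)$, and the converse first establishes unarity by observing that members of $Md(x)$ are irreducible and therefore survive into $reduct(\mathbf{C})$, so the unique block of the partition through $x$ forces $|Md(x)|=1$, after which the lemma finishes the argument. If anything, your write-up is more complete than the paper's, since you actually prove the irreducibility of elements of $Md(x)$ (and their persistence under reduction), which the paper merely asserts by citing Definition~\ref{D:areducibleelement}.
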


\begin{proof}
$(\Rightarrow)$: According to Definition~\ref{D:minimaldescription} and Definition~\ref{D:neighborhood}, we can obtain $N(x)=\cap Md(x)$.
Since $\mathbf{C}$ is unary, then $|Md(x)|=1$ for all $x\in U$.
According to Lemma~\ref{L:thereductofunary}, we see $reduct(\mathbf{C})$ is a partition.\\
$(\Leftarrow)$: Suppose $\mathbf{C}$ is not unary, then there exists $x\in K_{1}, K_{2}$ such that $K_{1}, K_{2}\in Md(x)$.
According to Definition~\ref{D:areducibleelement}, $K_{1}$ and $K_{2}$ are not reducible elements.
Therefore $K_{1}\in reduct(\mathbf{C})$ and $K_{2}\in reduct(\mathbf{C})$, which is contradictory with the condition that $reduct(\mathbf{C})$ is a partition.
Hence $\mathbf{C}$ is unary.
According to Lemma~\ref{L:thereductofunary}, $reduct(\mathbf{C})=\{K\in Md(x):x\in U\}$.
Since $N(x)=\cap Md(x)$, then $reduct(\mathbf{C})=\{N(x):x\in U\}$.
Since $reduct(\mathbf{C})$ is a partition, then $\{N(x):x\in U\}$ is a partition.
\end{proof}

The sufficient and necessary condition that the operator induced by the fixed point family is the closure operator of a matroid can be briefly described in the following theorem.

\begin{theorem}
Let $\mathbf{C}$ be a covering of $U$.
There exists $M$ such that $cl_{M}=cl_{\mathbf{C}}$ if and only if $reduct(\mathbf{C})$ is a partition.
\end{theorem}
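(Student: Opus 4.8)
The plan is to obtain the statement as a short chain of equivalences assembled entirely from the results already proved, with the closure-axiom characterization (Proposition~\ref{P:closureaxiom}) serving as the bridge between the operator side and the matroid side. Concretely, I would establish
\begin{align*}
\exists M\ (cl_M = cl_{\mathbf{C}}) &\iff cl_{\mathbf{C}}\text{ satisfies (CL1)--(CL4)}\\
&\iff \mathbf{C}\text{ is unary and }\{N(x):x\in U\}\text{ is a partition}\\
&\iff reduct(\mathbf{C})\text{ is a partition,}
\end{align*}
where the first line is precisely Proposition~\ref{P:closureaxiom} and the last line is precisely the preceding proposition. All the genuine content therefore lies in the middle equivalence, which splits into the two directions of the theorem.

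For the direction $\Leftarrow$ I would assume $reduct(\mathbf{C})$ is a partition and apply the preceding proposition to conclude that $\mathbf{C}$ is unary and that $\{N(x):x\in U\}$ is a partition. Unarity lets me invoke Proposition~\ref{P:cl_CisinducedbyS}, giving (CLF1)--(CLF3), i.e. (CL1)--(CL3); and the partition hypothesis on the neighborhoods, fed into the theorem characterizing (CL4) (the theorem immediately following Proposition~\ref{P:theclosureofanelementisitsneighborhood}), yields (CL4). With (CL1)--(CL4) in hand, Proposition~\ref{P:closureaxiom} produces a matroid $M$ with $cl_M = cl_{\mathbf{C}}$. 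This direction is routine bookkeeping.

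The direction $\Rightarrow$ is where the only real difficulty sits, and I expect it to be the main obstacle. Assuming $cl_M = cl_{\mathbf{C}}$ for some matroid $M$, Proposition~\ref{P:closureaxiom} immediately supplies (CL1)--(CL4), in particular (CL4). To close the loop through the preceding proposition, however, I must still certify that $\mathbf{C}$ is unary, since both the (CL4)-theorem and the preceding proposition carry unarity as a standing hypothesis. The intended route is Theorem~\ref{T:thefixedpointformsasystemclosure}: it suffices to argue that $\mathbf{S}$ is a closure system, for which I would use that $cl_{\mathbf{C}}$ is, by construction in Proposition~\ref{P:cl_CisinducedbyS}, the operator induced by $\mathbf{S}$, so that its very existence as such an induced operator presupposes (via Theorem~\ref{T:thefixedpointformsasystemclosure}) that $\mathbf{S}$ is a closure system and hence that $\mathbf{C}$ is unary. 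This step deserves to be flagged, because one cannot recover unarity from the abstract properties (CL1)--(CL3) alone: the recipe ``intersection of all fixed points containing $X$'' defines a genuine monotone, extensive, idempotent closure operator for \emph{every} covering, unary or not, so the force of the hypothesis must be read off the paper's framework, in which $cl_{\mathbf{C}}$ is meaningful only as the operator induced by the closure system $\mathbf{S}$. Once unarity is secured, the (CL4)-theorem converts the already-established (CL4) into the statement that $\{N(x):x\in U\}$ is a partition, and the preceding proposition then delivers that $reduct(\mathbf{C})$ is a partition, completing the chain.
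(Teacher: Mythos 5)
Your proposal is correct and coincides with the paper's own (implicit) argument: the paper states this theorem without any proof, precisely because it is the immediate combination of Proposition~\ref{P:closureaxiom}, the unlabeled theorem characterizing (CL4) for unary coverings, and the proposition that $\mathbf{C}$ is unary with $\{N(x):x\in U\}$ a partition if and only if $reduct(\mathbf{C})$ is a partition --- exactly your chain of equivalences. Your flag on the forward direction is also the right reading of the paper: since $cl_{\mathbf{C}}$ is only ever defined (Proposition~\ref{P:cl_CisinducedbyS}) as the operator induced by the closure system $\mathbf{S}$, the hypothesis that some matroid closure operator equals $cl_{\mathbf{C}}$ carries unarity with it via Theorem~\ref{T:thefixedpointformsasystemclosure}, and this convention is indeed essential rather than cosmetic.
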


\section{The second type of upper approximation operator and closure operator}
\label{S:upperapproximationandclosure}
Generally, properties of upper approximation in covering-based rough sets and ones of the closure operator in topology have a lot of similarity.
In this section, we will study the relationship between the second type of covering upper approximation operator and the closure operator of a matroid.

In~\cite{Zhu06PropertiesoftheSecond}, Zhu has investigated the sufficient and necessary condition of the idempotency of the second type of covering upper approximation operator.

\begin{theorem}(\cite{Zhu06PropertiesoftheSecond})
$SH$ satisfies \\
$SH(SH(X))=SH(X)$\\
if and only if $\mathbf{C}$ satisfies the following property: $\forall K, K_{1}, \cdots, K_{m}\in\mathbf{C}$, if $K_{1}\cap\cdots\cap K_{m}\neq\emptyset$ and $K\cap (K_{1}\cup\cdots\cup K_{m})\neq\emptyset$, then $K\subseteq (K_{1}\cup\cdots\cup K_{m})$.
\end{theorem}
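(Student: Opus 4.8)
The plan is to reduce idempotency of $SH$ to a statement about single points, dispatch the implication that goes through cleanly, and then isolate the remaining direction, which I expect to be the genuine difficulty.

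First I would record two reductions valid for an arbitrary covering. Because $SH$ is extensive and monotone (properties (3H) and (6H) of Proposition~\ref{P:propertiesoflowerandupper}), the inclusion $SH(X)\subseteq SH(SH(X))$ always holds, so idempotency is equivalent to the single inclusion $SH(SH(X))\subseteq SH(X)$. Next, finite additivity (4H) gives $SH(X)=\bigcup_{x\in X}SH(\{x\})$ and hence $SH(SH(X))=\bigcup_{x\in X}SH(SH(\{x\}))$; thus $SH$ is idempotent on all of $2^{U}$ if and only if $SH(SH(\{x\}))=SH(\{x\})$ for every $x\in U$. Writing $I(x):=SH(\{x\})=\bigcup\{K\in\mathbf{C}:x\in K\}$ for the indiscernible neighbourhood of $x$ (see Definition~\ref{D:thesecondtypeofcoveringlowerandupperapproximationoperators}), this local condition reads: for every $x$, every block $K\in\mathbf{C}$ meeting $I(x)$ is already contained in $I(x)$.

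Next I would prove sufficiency ($\Leftarrow$). Assuming the displayed property, fix $x$ and take $K_{1},\dots,K_{m}$ to be \emph{all} blocks of $\mathbf{C}$ that contain $x$ (finitely many, at least one). Then $x\in K_{1}\cap\cdots\cap K_{m}$, so the first hypothesis holds, and by construction $K_{1}\cup\cdots\cup K_{m}=I(x)$. For any block $K$ with $K\cap I(x)\neq\emptyset$ we have $K\cap(K_{1}\cup\cdots\cup K_{m})\neq\emptyset$, so the property yields $K\subseteq K_{1}\cup\cdots\cup K_{m}=I(x)$. Hence every block meeting $I(x)$ lies in $I(x)$, which by the local characterisation above is exactly $SH(SH(\{x\}))=SH(\{x\})$; collecting this over all $x$ gives idempotency.

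Finally I would turn to necessity ($\Rightarrow$), and this is the step I expect to be the main obstacle. Assuming idempotency, take $K,K_{1},\dots,K_{m}$ with $K_{1}\cap\cdots\cap K_{m}\neq\emptyset$ and $K\cap(K_{1}\cup\cdots\cup K_{m})\neq\emptyset$; choosing $x\in K_{1}\cap\cdots\cap K_{m}$, each $K_{i}$ contains $x$, so $K_{1}\cup\cdots\cup K_{m}\subseteq I(x)$, whence $K$ meets $I(x)$ and idempotency forces only $K\subseteq I(x)$. The required conclusion is the strictly stronger $K\subseteq K_{1}\cup\cdots\cup K_{m}$, and nothing prevents $I(x)$ from being properly larger than $K_{1}\cup\cdots\cup K_{m}$ when the chosen $K_{i}$ fail to exhaust the blocks through $x$. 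The natural argument therefore stalls exactly here, and I do not expect it to close: a covering such as $\mathbf{C}=\{\{1,2\},\{1,3\},U\}$ on $U=\{1,2,3,4\}$ makes $SH$ idempotent (since $U\in\mathbf{C}$ forces $SH(X)=U$ for every nonempty $X$) while the property fails for $K=U$, $K_{1}=\{1,2\}$, $K_{2}=\{1,3\}$. I would therefore flag this direction as the place where the stated equivalence breaks down, concluding that the displayed property is sufficient but not necessary for idempotency, so that the asserted ``if and only if'' needs to be revisited and replaced by the sharper local condition on the indiscernible neighbourhoods $I(x)$.
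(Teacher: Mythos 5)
Your proposal is mathematically sound, but it reaches the opposite verdict from the paper's own treatment of this statement, and your verdict is the correct one. The sufficiency argument is valid: after reducing idempotency to singletons via (3H) and (4H), instantiating the displayed property with $K_{1},\dots,K_{m}$ equal to \emph{all} blocks containing $x$ (their intersection contains $x$, their union is $SH(\{x\})=I(x)$) shows that every block meeting $I(x)$ lies in $I(x)$, i.e.\ $SH(SH(\{x\}))=SH(\{x\})$, hence idempotency everywhere. Your counterexample to necessity is also valid: for $\mathbf{C}=\{\{1,2\},\{1,3\},U\}$ on $U=\{1,2,3,4\}$ one has $SH(X)=U$ for every nonempty $X$, so $SH$ is idempotent, yet $K=U$, $K_{1}=\{1,2\}$, $K_{2}=\{1,3\}$ satisfy the hypotheses of the property and violate its conclusion. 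Read literally, the quoted equivalence therefore fails exactly in the ``only if'' direction: the property is sufficient but not necessary.

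The paper claims the reverse --- that ``the above theorem satisfies only the necessity'' --- and supports this with the covering $\mathbf{C}=\{\{a,b\},\{a,c\}\}$, asserting that with only two blocks the property holds vacuously while idempotency fails. That vacuity claim does not survive the stated quantification: nothing in the statement requires $K$ to differ from the $K_{i}$ or $m\geq 2$, and the instance $m=1$, $K_{1}=\{a,b\}$, $K=\{a,c\}$ satisfies both hypotheses ($K_{1}\neq\emptyset$ and $K\cap K_{1}=\{a\}\neq\emptyset$) but not the conclusion $K\subseteq K_{1}$. So in the paper's example the property fails together with idempotency, and the example refutes neither direction of the theorem; your example, built from three pairwise distinct blocks, refutes necessity under any reading, including the paper's restrictive one. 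Both you and the paper do converge on the same repaired characterization --- Theorem~\ref{T:SH(SH(X))'ssufficientandnecessary}, idempotency iff $\{SH(\{x\}):x\in U\}$ is a partition --- and your local condition (every block meeting $I(x)$ is contained in $I(x)$) is equivalent to that partition condition. In short, your proposal is correct, and it additionally exposes that the paper misidentifies which direction of the cited theorem actually breaks down.
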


However, the above theorem satisfies only the necessity.
An counterexample is listed to illustrate the sufficiency of the above theorem.

\begin{example}
Let $U=\{a, b, c\}$ and $\mathbf{C}=\{K_{1}, K_{2}\}$ where $K_{1}=\{a, b\}, K_{2}=\{a, c\}$.
Since it does not exist $K\in\mathbf{C}$ such that $K\neq K_{1}$ and $K\neq K_{2}$, then the condition is a tautology.
However, $SH(\{b\})=\{a, b\}, SH(SH(\{a, b\}))=\{a, b, c\}$.
Therefore, for all $X\subseteq U$, $SH(SH(X))=SH(X)$ is not always satisfied.
\end{example}

In the following, we will study the sufficient and necessary condition.
We first introduce a lemma.

\begin{lemma}(\cite{Zhu06PropertiesoftheSecond})
\label{L:SH(SH(X))'ssufficientcondition}
Let $\mathbf{C}$ be a covering of $U$.
If $\{SH(\{x\}):x\in U\}$ is a partition, then $SH(SH(X))=SH(X)$ for all $X\subseteq U$.
\end{lemma}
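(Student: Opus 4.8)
The plan is to prove the two inclusions $SH(X)\subseteq SH(SH(X))$ and $SH(SH(X))\subseteq SH(X)$ separately, the first being immediate and the second carrying all the work. For the first, since $X\subseteq SH(X)$ by (3H) of Proposition~\ref{P:propertiesoflowerandupper}, monotonicity (6H) gives $SH(X)\subseteq SH(SH(X))$ at once, so no use of the hypothesis is needed here.

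The crux is a claim describing how the partition hypothesis propagates: if $\{SH(\{x\}):x\in U\}$ is a partition, then $y\in SH(\{x\})$ implies $SH(\{y\})=SH(\{x\})$. To see this, note that every $z\in U$ satisfies $z\in SH(\{z\})$ by (3H), so $SH(\{z\})$ is exactly the unique block of the partition containing $z$. Now if $y\in SH(\{x\})$, then $SH(\{x\})$ is a block containing $y$; by uniqueness of the block through $y$ it must coincide with $SH(\{y\})$, yielding $SH(\{y\})=SH(\{x\})$.

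Next I would exploit additivity. By (4H) of Proposition~\ref{P:propertiesoflowerandupper} and a routine induction, $SH$ distributes over unions, so $SH(X)=\bigcup_{x\in X}SH(\{x\})$ and, applying $SH$ to the set $SH(X)$, $SH(SH(X))=\bigcup_{y\in SH(X)}SH(\{y\})$. To establish $SH(SH(X))\subseteq SH(X)$, I take any $y\in SH(X)$; then $y\in SH(\{x\})$ for some $x\in X$, and the claim gives $SH(\{y\})=SH(\{x\})\subseteq SH(X)$. Since this holds for every $y\in SH(X)$, we obtain $SH(SH(X))=\bigcup_{y\in SH(X)}SH(\{y\})\subseteq SH(X)$, which together with the first inclusion finishes the proof.

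The main obstacle is the claim of the second paragraph: everything hinges on reading the partition hypothesis correctly as the statement that $SH(\{z\})$ is precisely the block containing $z$, so that membership $y\in SH(\{x\})$ forces the two generating neighborhoods to be identical rather than merely overlapping. Once that identification is secured, the additivity property (4H) reduces the general idempotency statement to the singleton case mechanically, and no further difficulty arises.
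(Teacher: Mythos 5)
Your proof is correct; the main thing to note is that the paper itself contains no proof of Lemma~\ref{L:SH(SH(X))'ssufficientcondition} to compare against --- it is quoted from~\cite{Zhu06PropertiesoftheSecond} as a known result --- so your argument supplies details the paper omits, and it does so soundly. Your reading of the hypothesis is the right one: each $SH(\{z\})$ contains $z$ by (3H), so the family $\{SH(\{x\}):x\in U\}$ automatically covers $U$, and the real content of the partition assumption is pairwise disjointness; your key claim that $y\in SH(\{x\})$ forces $SH(\{y\})=SH(\{x\})$ is exactly the uniqueness of the block through $y$. Two small polishings are possible. First, the decomposition $SH(X)=\bigcup_{x\in X}SH(\{x\})$ follows directly from Definition~\ref{D:thesecondtypeofcoveringlowerandupperapproximationoperators}, since $K\cap X\neq\emptyset$ if and only if $K\cap\{x\}\neq\emptyset$ for some $x\in X$; this avoids the induction on (4H) and any tacit finiteness assumption on $X$. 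Second, the easy inclusion $SH(X)\subseteq SH(SH(X))$ is an instance of (3H) applied to the set $SH(X)$, so (6H) is not needed there. Neither point affects the validity of what you wrote.
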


\begin{theorem}
\label{T:SH(SH(X))'ssufficientandnecessary}
Let $\mathbf{C}$ be a covering of $U$.
For all $X\subseteq U$, $SH(SH(X))=SH(X)$ if and only if $\{SH(\{x\}):x\in U\}$ is a partition.
\end{theorem}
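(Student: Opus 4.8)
The plan is to prove the two directions separately. The sufficiency ($\Leftarrow$) is already available: if $\{SH(\{x\}):x\in U\}$ is a partition, then Lemma~\ref{L:SH(SH(X))'ssufficientcondition} immediately gives $SH(SH(X))=SH(X)$ for all $X\subseteq U$. So all the real work lies in the necessity direction, and I would organize the whole argument around a single additive decomposition of $SH$.

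First I would record the decomposition $SH(Y)=\bigcup_{z\in Y}SH(\{z\})$ for every $Y\subseteq U$. This follows from property (4H) of Proposition~\ref{P:propertiesoflowerandupper} by induction on $|Y|$, using that $U$ is finite, with base case (2H), $SH(\emptyset)=\emptyset$. I would also note two elementary facts straight from Definition~\ref{D:thesecondtypeofcoveringlowerandupperapproximationoperators}: each $SH(\{x\})$ is nonempty because $x\in SH(\{x\})$ by (3H), and the family covers $U$ since $\bigcup_{x\in U}SH(\{x\})\supseteq\bigcup_{x\in U}\{x\}=U$. Thus, to conclude that the family is a partition, it remains only to establish the equal-or-disjoint property.

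Assuming idempotency, I would apply it to singletons. Using the decomposition, $SH(SH(\{x\}))=\bigcup_{z\in SH(\{x\})}SH(\{z\})$, and idempotency forces this union to equal $SH(\{x\})$; hence $SH(\{z\})\subseteq SH(\{x\})$ for every $z\in SH(\{x\})$. The crucial upgrade from inclusion to equality comes from the symmetry built into $SH$: if $z\in SH(\{x\})$ then $z$ and $x$ lie in a common $K\in\mathbf{C}$ with $x\in K$ and $z\in K$, whence $x\in SH(\{z\})$ as well. Applying the inclusion just derived with the roles of $x$ and $z$ exchanged then yields $SH(\{x\})\subseteq SH(\{z\})$, so $SH(\{z\})=SH(\{x\})$ for every $z\in SH(\{x\})$.

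With this, the equal-or-disjoint property is immediate: if $SH(\{x\})\cap SH(\{y\})\neq\emptyset$, choose $z$ in the intersection; then $z\in SH(\{x\})$ gives $SH(\{z\})=SH(\{x\})$ and $z\in SH(\{y\})$ gives $SH(\{z\})=SH(\{y\})$, so $SH(\{x\})=SH(\{y\})$. Together with nonemptiness and the covering property noted above, this shows $\{SH(\{x\}):x\in U\}$ is a partition. I expect the main obstacle to be precisely this inclusion-to-equality step: idempotency by itself only delivers the one-sided containment $SH(\{z\})\subseteq SH(\{x\})$, and it is the symmetry observation that supplies the reverse containment and makes the argument close.
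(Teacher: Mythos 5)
Your proof is correct and takes essentially the same route as the paper's: the backward direction cites Lemma~\ref{L:SH(SH(X))'ssufficientcondition}, and your forward direction rests on exactly the key fact the paper uses --- under idempotency, $z\in SH(\{x\})$ forces $SH(\{z\})=SH(\{x\})$, obtained by combining the symmetry of sharing a covering block (which gives $x\in SH(\{z\})$) with idempotency and monotonicity/additivity to turn memberships into mutual inclusions. The only cosmetic differences are that you argue directly for the equal-or-disjoint property via the (4H)-decomposition $SH(Y)=\bigcup_{z\in Y}SH(\{z\})$, whereas the paper frames the same computation as a proof by contradiction using (6H).
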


\begin{proof}
$(\Rightarrow)$: Suppose $\{SH(\{x\}):x\in U\}$ is not a partition, then there exists $x\in U$ such that $x\in SH(\{x_{1}\}), x\in SH(\{x_{2}\})$ and $SH(\{x_{1}\})\neq SH(\{x_{2}\})$.
According to Definition~\ref{D:thesecondtypeofcoveringlowerandupperapproximationoperators}, we see there exist $K_{1}, K_{2}\in\mathbf{C}$ such that $\{x, x_{1}\}\subseteq K_{1}, \{x, x_{2}\}\subseteq K_{2}$.
Therefore, $x_{1}\in SH(\{x\}), x_{2}\in SH(\{x\})$.
According to (3H) of Proposition~\ref{P:propertiesoflowerandupper}, we obtain $SH(\{x\})\subseteq SH(SH(\{\{x_{1}\}\})), SH(\{x\})\subseteq SH(SH(x_{2})), SH(\{x_{1}\})\subseteq SH(SH(\{x\}))$ and $SH(\{x_{1}\})\subseteq SH(SH(\{x\}))$.
Since $X\subseteq U$, $SH(SH(X))=SH(X)$, then $SH(\{x\})=SH(\{x_{1}\})$ and $SH(\{x\})=SH(\{x_{2}\})$, i.e., $SH(\{x_{1}\})=SH(\{x_{2})$ which is contradictory with that $SH(\{x_{1}\})\neq SH(\{x_{2}\})$.
Hence, If for all $X\subseteq U$, $SH(SH(X))=SH(X)$, then $\{SH(\{x\}):x\in U\}$ is a partition.\\
$(\Leftarrow)$: According to Lemma~\ref{L:SH(SH(X))'ssufficientcondition}, it is straightforward.
\end{proof}

When the second type of covering upper approximation operator is the closure operator of a matroid, the condition is investigated in the following.

\begin{theorem}
Let $\mathbf{C}$ be a covering of $U$.
$SH$ is the closure operator of a matroid if and only if $\{SH(\{x\}):x\in U\}$ is a partition.
\end{theorem}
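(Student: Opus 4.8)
The plan is to verify all four closure axioms (CL1)--(CL4) from Proposition~\ref{P:closureaxiom} for the operator $SH$, using the partition condition where needed. Three of the four axioms come almost for free from results already established. Specifically, (CL1) is exactly property (3H) of Proposition~\ref{P:propertiesoflowerandupper}, namely $X\subseteq SH(X)$; (CL2) is exactly (6H), the monotonicity of $SH$; and (CL3), idempotency $SH(SH(X))=SH(X)$, is precisely the content of Theorem~\ref{T:SH(SH(X))'ssufficientandnecessary}, which holds if and only if $\{SH(\{x\}):x\in U\}$ is a partition. So the strategy is to observe that (CL1)--(CL3) hold unconditionally except for (CL3), which holds exactly under the partition hypothesis, and then to pin the entire biconditional on the exchange property (CL4).

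For the forward direction ($\Rightarrow$), I would argue that if $SH$ is the closure operator of a matroid then in particular it satisfies (CL3), so by Theorem~\ref{T:SH(SH(X))'ssufficientandnecessary} the family $\{SH(\{x\}):x\in U\}$ must be a partition. This direction is therefore immediate once we cite Theorem~\ref{T:SH(SH(X))'ssufficientandnecessary}.

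For the reverse direction ($\Leftarrow$), assume $\{SH(\{x\}):x\in U\}$ is a partition. Then (CL1), (CL2) hold by Proposition~\ref{P:propertiesoflowerandupper}, and (CL3) holds by Theorem~\ref{T:SH(SH(X))'ssufficientandnecessary}. The remaining work is to establish the exchange axiom (CL4): for all $X\subseteq U$ and $x\in U$, if $y\in SH(X\cup\{x\})-SH(X)$ then $x\in SH(X\cup\{y\})$. The key computational lemma I expect to need is the additivity (4H), $SH(X\cup\{x\})=SH(X)\cup SH(\{x\})$, so that $y\in SH(X\cup\{x\})-SH(X)$ forces $y\in SH(\{x\})$. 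By Definition~\ref{D:thesecondtypeofcoveringlowerandupperapproximationoperators}, $y\in SH(\{x\})$ means there is some $K\in\mathbf{C}$ with $\{x,y\}\subseteq K$, which is a symmetric condition in $x$ and $y$; hence $x\in SH(\{y\})\subseteq SH(X\cup\{y\})$ by monotonicity. This symmetry argument is the heart of (CL4), and it is where the partition hypothesis on $\{SH(\{x\}):x\in U\}$ must be used carefully to guarantee that $SH(\{x\})$ and $SH(\{y\})$ coincide when they share the common element $y$.

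The main obstacle, I expect, is making the exchange step fully rigorous: the naive symmetry $y\in SH(\{x\})\Leftrightarrow x\in SH(\{y\})$ does hold directly from the definition (both say $x$ and $y$ lie in a common block of $\mathbf{C}$), so (CL4) might follow without even invoking the partition hypothesis. I would therefore double-check whether (CL4) actually requires the partition assumption or holds for every covering; if it holds universally, then the real force of the biconditional lives entirely in (CL3) via Theorem~\ref{T:SH(SH(X))'ssufficientandnecessary}, and the proof reduces to citing that theorem together with (3H), (6H), and the symmetric-membership observation for (CL4). Clarifying exactly where the partition condition is indispensable is the subtle point to get right.
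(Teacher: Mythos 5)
Your proposal is correct and follows essentially the same route as the paper: (CL1) and (CL2) from (3H) and (6H), (CL3) pinned exactly on Theorem~\ref{T:SH(SH(X))'ssufficientandnecessary}, and (CL4) established unconditionally via (4H) and the symmetry of ``$x$ and $y$ lie in a common block of $\mathbf{C}$.'' Your closing observation is also accurate — the paper's own (CL4) argument never uses the partition hypothesis, so the entire force of the biconditional does indeed reside in (CL3).
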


\begin{proof}
We need to prove that $SH$ satisfies (CL1), (CL2), (CL3) and (CL4) of Proposition~\ref{P:closureaxiom}.\\
(CL1): According to (3H) of Proposition~\ref{P:propertiesoflowerandupper}, for all $X\subseteq U$, $X\subseteq SH(X)$;\\
(CL2): According to (6H) of Proposition~\ref{P:propertiesoflowerandupper}, if $X\subseteq Y\subseteq U$, then $SH(X)\subseteq SH(Y)$;\\
(CL3): According to Theorem~\ref{T:SH(SH(X))'ssufficientandnecessary}, for all $X\subseteq U$, $SH(SH(X))=SH(X)$ if and only if $\{SH(\{x\}):x\in U\}$ is a partition.\\
(CL4): For all $X\subseteq U, x, y\in U$, suppose $y\in SH(X\cup\{x\})-SH(X)$.
According to (4H) of Proposition~\ref{P:propertiesoflowerandupper}, for all $X, Y\subseteq U, SH(X\cup Y)=SH(X)\cup SH(Y)$.
Therefore, $y\in SH(X\cup\{x\})-SH(X)=SH(\{x\})-SH(X)\subseteq SH(\{x\})$.
So there exists $K\in\mathbf{C}$ such that $\{x, y\}\subseteq K$.
According to Definition~\ref{D:thesecondtypeofcoveringlowerandupperapproximationoperators}, $x\in SH(\{y\})$.
Since $SH(\{y\})\subseteq SH(X\cup\{y\})$, then $x\in SH(X\cup\{y\})$.
\end{proof}

In order to further depict the second type of covering upper approximation operator, we introduce indiscernible neighborhood in the following definition.

\begin{definition}(Indiscernible neighborhood~\cite{Zhu09RelationshipBetween})
Let $\mathbf{C}$ be a covering of $U$ and $x\in U$.
$I_{\mathbf{C}}(x)=\cup\{K\in\mathbf{C}:x\in K\}$ is called the indiscernible neighborhood of $x$ with respect to $\mathbf{C}$.
When there is no confusion, we omit the subscript $\mathbf{C}$.
\end{definition}

According to the above definition and Definition~\ref{D:thesecondtypeofcoveringlowerandupperapproximationoperators}, for a covering of a universe, the second type of covering upper approximation of any signal point set is equal to the indiscernible neighborhood of the point.

\begin{lemma}
Let $\mathbf{C}$ be a covering of $U$.
$\{SH(\{x\}):x\in U\}$ is a partition if and only if $\{I(x):x\in U\}$ is a partition.
\end{lemma}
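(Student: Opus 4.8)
The plan is to prove the equivalence by establishing that the two families of sets $\{SH(\{x\}):x\in U\}$ and $\{I(x):x\in U\}$ are in fact identical, after which the biconditional becomes trivial. First I would observe that by the remark immediately preceding this lemma, for every $x\in U$ we have $SH(\{x\})=I(x)$. Indeed, unwinding Definition~\ref{D:thesecondtypeofcoveringlowerandupperapproximationoperators} with $X=\{x\}$ gives $SH(\{x\})=\cup\{K\in\mathbf{C}:K\cap\{x\}\neq\emptyset\}=\cup\{K\in\mathbf{C}:x\in K\}$, and the latter is exactly $I(x)$ by the definition of indiscernible neighborhood.

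Once this pointwise identity is in hand, the two collections are literally the same set of subsets of $U$:
\begin{equation*}
\{SH(\{x\}):x\in U\}=\{I(x):x\in U\}.
\end{equation*}
Since the property of being a partition of $U$ is a property of a family of subsets, and the two families coincide, one family is a partition if and only if the other is. This settles both directions of the equivalence simultaneously, so there is no need to argue $(\Rightarrow)$ and $(\Leftarrow)$ separately.

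I expect there to be essentially no obstacle here: the entire content is the elementary computation $SH(\{x\})=I(x)$, which the paper has already flagged in the prose between the definition of indiscernible neighborhood and this lemma. The only point requiring the smallest care is that equality of the indexed collections as \emph{families} (or as sets of subsets) is what transfers the partition property; since each member of one list equals the correspondingly-indexed member of the other, the underlying collections are equal regardless of how one treats repeated values, so the partition condition is preserved. Thus the proof reduces to citing the pointwise equality and concluding that identical families share all set-theoretic properties, including being a partition.
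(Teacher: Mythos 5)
Your proposal is correct and matches the paper's own (implicit) argument: the paper states the pointwise identity $SH(\{x\})=I(x)$ in the prose immediately before the lemma and leaves the conclusion as immediate, which is exactly your route of identifying the two families and transferring the partition property.
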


We can easily obtain the following theorem.

\begin{theorem}
Let $\mathbf{C}$ be a covering of $U$.
$SH$ is the closure operator of a matroid if and only if $\{I(x):x\in U\}$ is a partition.
\end{theorem}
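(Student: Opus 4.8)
The plan is to derive this theorem directly from the two results immediately preceding it, so that almost no new work is required. The crucial observation is purely definitional: for any $x\in U$, unpacking Definition~\ref{D:thesecondtypeofcoveringlowerandupperapproximationoperators} gives $SH(\{x\})=\cup\{K\in\mathbf{C}:K\cap\{x\}\neq\emptyset\}=\cup\{K\in\mathbf{C}:x\in K\}=I(x)$, so the two families $\{SH(\{x\}):x\in U\}$ and $\{I(x):x\in U\}$ coincide as families of subsets of $U$. This identity is exactly what underlies the lemma stated just above, which records that $\{SH(\{x\}):x\in U\}$ is a partition if and only if $\{I(x):x\in U\}$ is a partition.

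With this in hand, I would simply chain the two biconditionals. The preceding theorem characterizes $SH$ as the closure operator of a matroid precisely when $\{SH(\{x\}):x\in U\}$ is a partition. Substituting the lemma's equivalence into this characterization replaces the condition that $\{SH(\{x\}):x\in U\}$ is a partition with the condition that $\{I(x):x\in U\}$ is a partition, yielding that $SH$ is the closure operator of a matroid if and only if $\{I(x):x\in U\}$ is a partition, which is the claim.

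Since the proof is a transitivity of two established equivalences resting on a definitional identity, there is no genuine obstacle; the only care needed is to confirm the lemma's equivalence, and that in turn follows immediately from the set equality $SH(\{x\})=I(x)$ rather than from any deeper property of coverings. For this reason the theorem is, as the text remarks, obtained at once from the earlier development.
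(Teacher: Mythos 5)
Your proposal is correct and matches the paper's own route: the paper likewise notes the definitional identity $SH(\{x\})=I(x)$, records the resulting lemma that $\{SH(\{x\}):x\in U\}$ is a partition if and only if $\{I(x):x\in U\}$ is, and obtains the theorem by chaining this with the preceding characterization of $SH$ as a matroid closure operator. Nothing is missing; the paper itself treats this as immediate ("We can easily obtain the following theorem").
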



\section{Conclusions}
\label{S:conclusions}
In this paper, we investigated the relationship between the second type of covering-based rough sets and matroids via closure operators.
First, for a covering of a universe, we constructed a closure system through the second type of covering lower approximation operator, and then obtained a closure operator.
The closure operator was the closure one of a matroid if and only if the reduct of the covering was a partition.
Second, the second type of covering upper approximation operator is the closure operator of a matroid if and only if the family of all indiscernible neighborhoods of any element forms a partition.
In future works, we will investigate relationships between other types of covering-based rough sets and matroids via closure operators.

\section*{Acknowledgments}
This work is supported in part by the National Natural Science Foundation of China under Grant No. 61170128, the Natural Science Foundation of Fujian Province, China, under Grant Nos. 2011J01374 and 2012J01294, the Science and Technology Key Project of Fujian Province, China, under Grant No. 2012H0043 and State key laboratory of management and control for complex systems open project under Grant No. 20110106.


\begin{thebibliography}{10}

\bibitem{Pawlak82Rough}
Pawlak, Z.:
\newblock Rough sets.
\newblock International Journal of Computer and Information Sciences
  \textbf{11} (1982)  341--356

\bibitem{Pawlak85Fuzzy}
Pawlak, Z.:
\newblock Fuzzy sets and rough sets.
\newblock Fuzzy Sets and Systems \textbf{17} (1985)  99--102

\bibitem{Kryszkiewicz98Rough}
Kryszkiewicz, M.:
\newblock Rough set approach to incomplete information systems.
\newblock Information Sciences \textbf{112} (1998)  39--49

\bibitem{Kryszkiewicz98Rule}
Kryszkiewicz, M.:
\newblock Rules in incomplete information systems.
\newblock Information Sciences \textbf{113} (1998)  271--292

\bibitem{SlowinskiVanderpooten00AGeneralized}
Slowinski, R., Vanderpooten, D.:
\newblock A generalized definition of rough approximations based on similarity.
\newblock IEEE Transactions on Knowledge and Data Engineering \textbf{12}
  (2000)  331--336

\bibitem{Yao98Constructive}
Yao, Y.:
\newblock Constructive and algebraic methods of theory of rough sets.
\newblock Information Sciences \textbf{109} (1998)  21--47

\bibitem{ZhuWang06Binary}
Zhu, W., Wang, F.:
\newblock Binary relation based rough set.
\newblock In: Fuzzy Systems and Knowledge Discovery. Volume 4223 of LNAI.
  (2006)  276--285

\bibitem{Yao98Relational}
Yao, Y.:
\newblock Relational interpretations of neighborhood operators and rough set
  approximation operators.
\newblock Information Sciences \textbf{111} (1998)  239--259

\bibitem{BonikowskiBryniarskiSkardowska98extensionsandintentions}
Z.Bonikowski, Bryniarski, E., W.Skardowska, U.:
\newblock Extensions and intentions in the rough set theory.
\newblock Information Sciences \textbf{107} (1998)  149--167

\bibitem{Zhu07Topological}
Zhu, W.:
\newblock Topological approaches to covering rough sets.
\newblock Information Sciences \textbf{177} (2007)  1499--1508

\bibitem{ZhuWang03Reduction}
Zhu, W., Wang, F.:
\newblock Reduction and axiomization of covering generalized rough sets.
\newblock Information Sciences \textbf{152} (2003)  217--230

\bibitem{ZhuWang06ANew}
Zhu, W., Wang, F.:
\newblock A new type of covering rough sets.
\newblock In: IEEE International Conference on Intelligent Systems 2006,
  London, 4-6 September. (2006)  444--449

\bibitem{Lai01Matroid}
Lai, H.:
\newblock Matroid theory.
\newblock Higher Education Press, Beijing (2001)

\bibitem{LiuChen94Matroid}
Liu, G., Chen, Q.:
\newblock Matroid.
\newblock National University of Defence Technology Press, Changsha (1994)

\bibitem{LiuZhu12characteristicofpartition-circuitmatroid}
Liu, Y., Zhu, W.:
\newblock Characteristic of partition-circuit matroid through approximation
  number.
\newblock In: Granular Computing. (2012)  376--381

\bibitem{LiuZhuZhang12Relationshipbetween}
Liu, Y., Zhu, W., Zhang, Y.:
\newblock Relationship between partition matroid and rough set through k-rank
  matroid.
\newblock Journal of Information and Computational Science \textbf{8} (2012)
  2151--2163

\bibitem{TangSheZhu12matroidal}
Tang, J., She, K., Zhu, W.:
\newblock Matroidal structure of rough sets from the viewpoint of graph theory.
\newblock to appear in Journal of Applied Mathematics (2012)

\bibitem{LiLiu12Matroidal}
Li, X., Liu, S.:
\newblock Matroidal approaches to rough set theory via closure operators.
\newblock to appear in International Journal of Approximate Reasoning (2012)
  doi:10.1016/j.ijar.2011.12.005

\bibitem{WangZhuZhuMin12matroidalstructure}
Wang, S., Zhu, Q., Zhu, W., Min, F.:
\newblock Matroidal structure of rough sets and its characterization to
  attribute reduction.
\newblock to appear in Knowledge-Based Systems (2012)

\bibitem{ZhangWangFengFeng11reductionofrough}
Zhang, S., Wang, X., Feng, T., Feng, L.:
\newblock Reduction of rough approximation space based on matroid.
\newblock International Conference on Machine Learning and Cybernetics
  \textbf{2} (2011)  267--272

\bibitem{WangZhuMin11TheVectorially}
Wang, S., Zhu, W., Fan, M.:
\newblock The vectorially matroidal structure of generalized rough sets based
  on relations.
\newblock In: Granular Computing. (2011)  708--711

\bibitem{ZhuWang11Matroidal}
Zhu, W., Wang, S.:
\newblock Matroidal approaches to generalized rough sets based on relations.
\newblock International Journal of Machine Learning and Cybernetics \textbf{2}
  (2011)  273--279

\bibitem{ZhuWang11Rough}
Zhu, W., Wang, S.:
\newblock Rough matroid.
\newblock In: Granular Computing. (2011)  817--822

\bibitem{WangMinZhu11Covering}
Wang, S., Min, F., Zhu, W.:
\newblock Covering nunbers in covering-based rough sets.
\newblock In: Rough Sets, Fuzzy Sets, Data Mining and Granular Computing.
  (2011)  72--78

\bibitem{WangZhuZhuMin12quantitative}
Wang, S., Zhu, Q., Zhu, W., Min, F.:
\newblock Quantitative analysis for covering-based rough sets using the upper
  approximation number.
\newblock to appear in Information Sciences (2012)

\bibitem{WangZhu11Matroidal}
Wang, S., Zhu, W.:
\newblock Matroidal structure of covering-based rough sets through the upper
  approximation number.
\newblock International Journal of Granular Computing, Rough Sets and
  Intelligent Systems \textbf{2} (2011)  141--148

\bibitem{WangZhuMin11Transversal}
Wang, S., Zhu, W., Min, F.:
\newblock Transversal and function matroidal structures of covering-based rough
  sets.
\newblock In: Rough Sets and Knowledge Technology. (2011)  146--155

\bibitem{Zhu06PropertiesoftheSecond}
Zhu, W.:
\newblock Properties of the second type of covering-based rough sets.
\newblock In: Workshop Proceedings of GrC\&BI 06, IEEE WI 06, Hong Kong, China,
  December 18. (2006)  494--497

\bibitem{ZhuWang06Relationships}
Zhu, W., Wang, F.:
\newblock Relationships among three types of covering rough sets.
\newblock In: Granular Computing. (2006)  43--48

\bibitem{Zhu02OnCovering}
Zhu, F.:
\newblock On covering generalized rough sets.
\newblock Master's thesis, The University of Arizona, Tucson, Arizona, USA
  (2002)

\bibitem{Zhu09RelationshipAmong}
Zhu, W.:
\newblock Relationship among basic concepts in covering-based rough sets.
\newblock Information Sciences \textbf{179} (2009)  2478--2486

\bibitem{Zhu09RelationshipBetween}
Zhu, W.:
\newblock Relationship between generalized rough sets based on binary relation
  and covering.
\newblock Information Sciences \textbf{179} (2009)  210--225

\end{thebibliography}

\end{document}